\documentclass[lettersize,journal]{IEEEtran}
\usepackage{amsmath,amsfonts}
\usepackage{algorithmic}
\usepackage{algorithm}
\usepackage{array}
\usepackage[caption=false,font=normalsize,labelfont=sf,textfont=sf]{subfig}
\usepackage{textcomp}
\usepackage{stfloats}
\usepackage{url}
\usepackage{verbatim}
\usepackage{graphicx}
\usepackage{cite}
\usepackage{mathtools}
\usepackage{multicol}
\usepackage{multirow}
\usepackage{pifont} 
\usepackage{dsfont} 
\usepackage{amsmath}
\usepackage{amssymb}
\usepackage{amsthm} 

\usepackage{booktabs}   
\usepackage{tabularx}  
\usepackage{amsmath}   
\usepackage[table]{xcolor} 
\definecolor{mygreen}{RGB}{223,239,218}
\definecolor{myred}{RGB}{187,39,26}
\definecolor{lightyellow}{RGB}{255,248,220}

\definecolor{mygray}{rgb}{.95,.95,.95}

\newcommand{\biogap}{\vspace{-0.8\baselineskip}}

\definecolor{myblue}{RGB}{225,235,245}
\definecolor{lightblue}{RGB}{242,245,249}

\definecolor{ForestGreen}{RGB}{34,139,34}  
\newcommand{\upgain}[1]{{\scriptsize\color{darkgray}(#1)\,\color{ForestGreen}$\uparrow$}}
\newcommand{\downdrop}[1]{{\scriptsize\color{darkgray}(#1)\,\color{red}$\downarrow$}}
\newcommand{\zerogain}[1]{{\scriptsize\color{darkgray}(#1)}}

\newtheorem{proposition}{Proposition}

\begin{document}

\title{Advantage-Guided Gate: Reshaping Open-Ended Reasoning for Vision-Based Spatial Intelligence}

\author{Ling Lin, Yang Bai, Congcong Zhu, Jiangming Shi, Meng Wang, Yang Long,~\IEEEmembership{Senior Member~IEEE}, \\ Jingrun Chen, Ling Shao,~\IEEEmembership{Fellow~IEEE}, Huazhu Fu,~\IEEEmembership{Senior Member~IEEE}

\IEEEcompsocitemizethanks{
		
\IEEEcompsocthanksitem

        Ling Lin, Congcong Zhu, and Jingrun Chen are with the School of Artificial Intelligence and Data Science, University of Science and Technology of China, Hefei 230000, China, and Suzhou Institute for Advanced Research, USTC, Suzhou 215123, China. E-mail: linglin00d@gmail.com; cczly@ustc.edu.cn; jingrunchen@ustc.edu.cn.

        Yang Bai and Huazhu Fu are with Institute of Advanced Intelligence and Computing (IAIC), Agency for Science, Technology and Research (A*STAR), Singapore (E-mail: bai\_yang@a-star.edu.sg; hzfu@ieee.org).

        Jiangming Shi is with the School of Computer Science and Technology, East China Normal University, Shanghai, China (E-mail: jmshi@cs.ecnu.edu.cn).

        Meng Wang is with the Centre for Innovation \& Precision Eye Health, Department of Ophthalmology, Yong Loo Lin School of Medicine, National University of Singapore, Singapore (E-mail: wang.m@nus.edu.sg).

        Yang Long is with the Department of Computer Science, Durham University, United Kingdom (E-mail: yang.long@durham.ac.uk).

        Ling Shao is with the UCAS-Terminus AI Lab, University of Chinese Academy of Sciences, Beijing, China  (E-mail:ling.shao@ieee.org).
		
		(Corresponding author: Yang Bai)
		
}
        
}



\maketitle

\begin{abstract}
Multimodal large language models (MLLMs) have demonstrated significant potential in complex spatial scene understanding and reasoning tasks. However, their open-ended reasoning process is prone to decision errors and error accumulation, leading to instability in answer quality. To address this, we propose an advantage-guided gating framework that dynamically intervenes in and corrects deviations during the reasoning process. Specifically, we model step-by-step reasoning as a finite-horizon decision process and introduce Monte Carlo value evaluation on the reasoning tree to provide intermediate supervision signals. The framework includes Step-Advantage Gate and Trajectory-Advantage Gate, which dynamically select high-value reasoning steps and high-quality complete reasoning trajectories, respectively. During training, we perform supervised learning for the gates using reasoning trees generated via multi-branch sampling, and combine shared-parameter initialization with task-specific heads to achieve cross-task robustness and diversity. During inference, the model greedily selects high-value prefix reasoning steps while choosing the optimal reasoning head based on the problem type, thereby significantly improving the accuracy of the final answer. Furthermore, we constructed the Reasoning-Tree-160k dataset and performed two-stage learning on it. Extensive experiments demonstrate that this advantage-guided gating framework effectively enhances the performance of benchmark MLLMs in visual-based spatial understanding and reasoning tasks. The code is open to the public for research: \url{https://github.com/LingLin-ll/Advantage-Guided-Gate}. 
\end{abstract}

\begin{IEEEkeywords}
Multimodal Large Language Models, Reasoning Tree, Open-Ended Reasoning, Finite-Horizon Decision Process, Monte Carlo Evaluation.
\end{IEEEkeywords}

\begin{figure}[t]
    \centering
    \includegraphics[width=\linewidth]{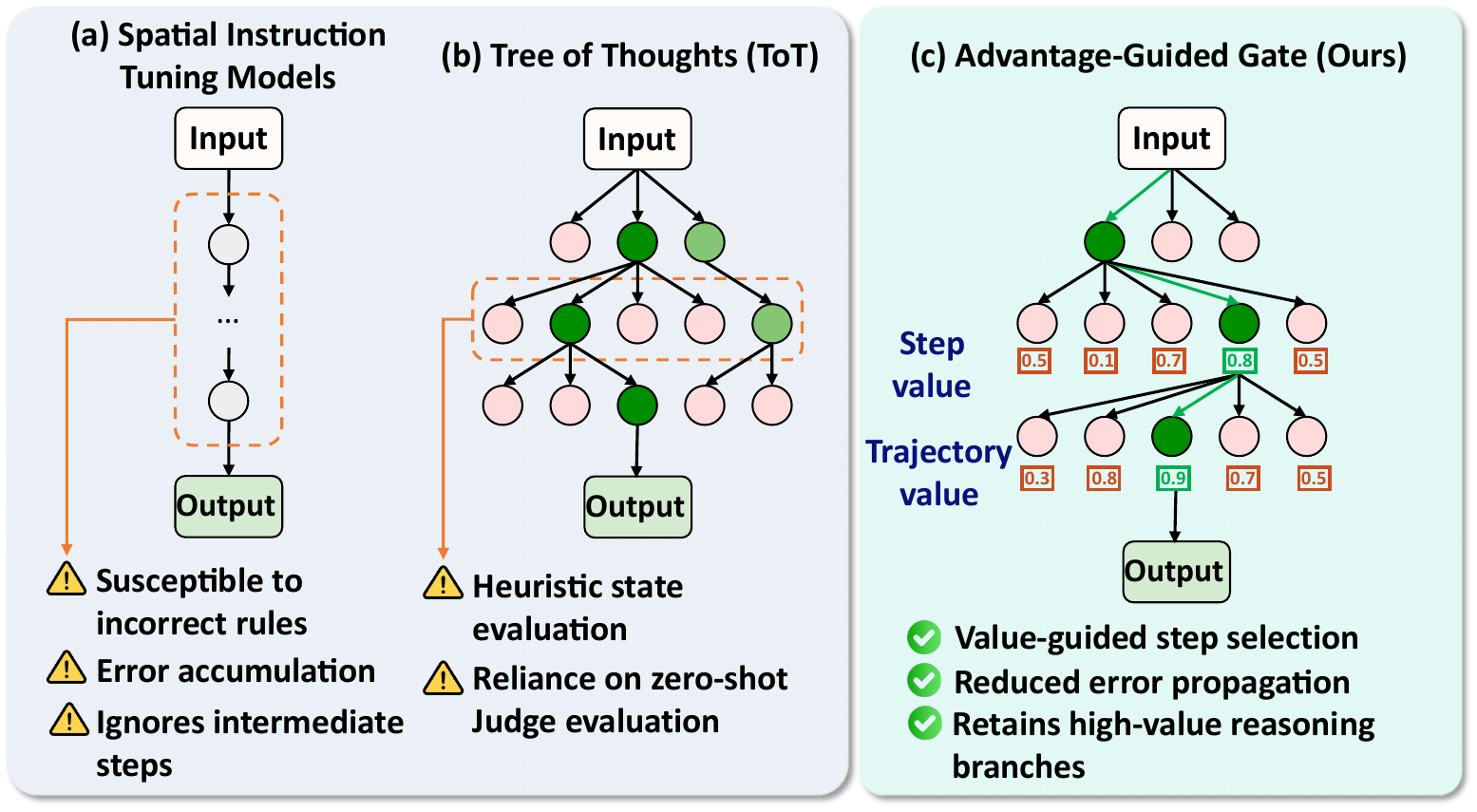}
    \caption{(a) Spatial instruction-tuned models generate a single autoregressive reasoning chain without explicitly evaluating intermediate-step quality, making them susceptible to incorrect spatial rules and accumulated errors. (b) Tree of Thoughts (ToT) explores multiple candidate branches, but its state selection relies on heuristic, zero-shot judge evaluation. (c) Our Advantage-Guided Gate estimates step- and trajectory-level values to suppress low-value intermediate steps and guide generation toward higher-value reasoning branches, thereby reducing downstream error propagation. Circles denote intermediate reasoning steps. Green circles and arrows indicate the selected reasoning steps and branches, while pink circles represent candidate nodes that are not selected for the final reasoning trajectory. The numerical scores denote the estimated prefix step or trajectory values.}
    \label{fig: cover figure}
    \vspace{-0.5cm}
\end{figure}

\section{Introduction}
\IEEEPARstart{M}{ultimodal} large models demonstrate significant potential in understanding and reasoning about complex environments. In spatial scenes, such models must not only perform precise spatial reasoning but also execute subsequent instructions based on the results of that reasoning; this capability is a critical link in effectively connecting general artificial intelligence (AGI) with real-world applications~\cite{zhang2026embodied3dbench,yang2025mmsi,cong2023reltr,gao2023scenehgn, wei2025planner3d,tan2023knowledge}. Current research primarily follows two approaches to data acquisition: The first assumes that the model has access to additional 3D or 2.5D information (such as point clouds~\cite{deng20253d, huang2024chat, chen2024ll3da}, camera parameters, or depth maps~\cite{lu2025matrix3d, liu2025worldmirror, zhu2025llava, zheng2025video}) in addition to traditional 2D visual inputs (images or videos). However, this reliance on additional data sources significantly limits the models’ broad applicability in real-world scenarios. The second category of research relies solely on monocular video inputs from the scene; under these conditions, the model’s spatial understanding and reasoning capabilities are typically referred to as “vision-based spatial intelligence”~\cite{li2025sti, yang2025thinking, zhao2025spacemind, zheng2026learning, wu2026spatial, wang2026s3tir}. Although some progress has been made in this domain, the reliability of reasoning by existing multimodal large models in 3D environments that require complex spatial reasoning remains unsatisfactory.

The core challenge lies in the deviation of reasoning mechanisms during video-based spatial reasoning, where local discrepancies in intermediate reasoning stages are propagated and amplified, leading to severe error accumulation along the reasoning trajectory. This deficiency is intrinsically linked to the autoregressive formulation of MLLMs. When conditioned on multimodal inputs, early erroneous outputs distort the soft probability distributions of subsequent tokens, thereby introducing permanent biases into the remaining Chain of Thought (CoT) trajectory. Under conventional end-to-end generative paradigms, once such intermediate biases are sampled, they are mathematically irreversible, leading to inevitable error amplification. A natural solution is to shift from holistic generation to a model that combines step-by-step CoT synthesis with step-level verification. However, implementing this fine-grained control faces a fundamental theoretical bottleneck: how to precisely evaluate the fidelity of generated steps within an unconstrained, nearly infinite search space? Compared to structured reasoning (such as mathematical derivations, where intermediate steps can be formally verified), open-ended spatiotemporal reasoning lacks deterministic verification trajectories, making it extremely challenging to estimate the quality of intermediate reasoning states.

To mitigate the aforementioned challenges, we propose an Advantage-Guided Gating framework. As illustrated in Fig.~\ref{fig: cover figure}, our method differs from existing spatial reasoning paradigms by explicitly suppressing low-value intermediate steps through learned value guidance. Specifically, we formulate step-wise reasoning as a finite-horizon Markov decision process (MDP), where each reasoning step constitutes an action, and the final correctness of the answer defines the terminal reward. Within this framework, an ideal step-level gate corresponds to a value-based policy improvement operator. Given that the true action-value function is latent and not directly observable, we construct reasoning trees and estimate prefix values via Monte Carlo rollouts. To substantiate this approach and facilitate training, we systematically constructed the Reasoning-Tree-160k dataset by generating extensive reasoning trees. Driven by the law of large numbers, the empirical subtree return asymptotically converges to the true action value as the rollout budget scales. To parameterize this mechanism, we introduce two distinct variants: the trajectory-advantage gate, which estimates trajectory-level terminal returns, and the step-advantage gate, which learns a discrete approximation of the ideal value-based gate. Theoretically, the performance gap between the ideal and the learned gating mechanisms can be decomposed into Monte Carlo estimation error and classification approximation error. Finally, we empirically and theoretically verify the performance gains using a lower confidence bound (LCB) on held-out task rewards.

Our main contributions can be summarized as the following: 

\begin{itemize}
    \item We establish a theoretical framework for advantage-guided open-ended reasoning and prove that advantage-guided gating yields policy improvement and reasoning-space contraction toward high-quality solution regions.
    \item We construct and release a large-scale reasoning tree dataset containing over 160K reasoning states and trajectories, providing hierarchical supervision signals for studying open-ended multimodal reasoning.
    \item We propose a hierarchical advantage supervision mechanism that derives intermediate reasoning preferences from terminal outcomes through multi-branch reasoning tree construction and backward propagation.
    \item We develop a plug-and-play reasoning control framework that reshapes inference-time reasoning distributions without modifying the underlying MLLM parameters, enabling reasoning enhancement.
\end{itemize}

\section{Related Work}
\subsection{Spatial Reasoning in Multimodal Large Language Models}
The emergence of MLLMs has significantly enhanced cross-modal understanding capabilities~\cite{zou2025uncertainty,an2025inter,zhang2026world2vlm,bai2024generalist,an2025striving,bai2024sentence,an2026biprolora}. In recent years, endowing MLLMs with spatial reasoning capabilities—specifically, the ability to perceive, localize, and reason logically about the three-dimensional physical world—has become a hot topic in academic research~\cite{liu2026openspatial,ma2026causalspatial,Gong2022Person,Gong2026Theory}. Based on the underlying geometric representations, existing research primarily follows two technical approaches: enhancement mechanisms based on explicit 3D representations, and spatial intelligence driven by monocular video.

\textbf{Enhancement mechanisms based on explicit 3D representations.} A mainstream research approach focuses on enhancing MLLMs' spatial awareness by introducing explicit 3D geometric primitives into traditional 2D inputs. For example, Hong et al.~\cite{hong20233d} proposed the 3D-LLM framework, which addresses the model's lack of global spatial awareness by reconstructing 3D point clouds from multi-view images and aligning them semantically with text. Xu et al.~\cite{Xu2024Point} developed PointLLM. This model incorporates a native 3D encoder that directly processes raw color point clouds, significantly improving accuracy in 3D bounding box prediction and object classification. Ji et al.~\cite{ji2024jm3d} proposed the JM3D framework, which achieves deep fusion of point cloud, text, and image features through a Structured Multimodal Organizer (SMO) and Joint Multi-modal Alignment (JMA).

\textbf{Spatial intelligence driven by monocular video.} Although explicit 3D methods have made significant progress, they are often constrained by the high cost of acquiring 3D data and rely heavily on specialized sensors. To overcome these physical limitations, another paradigm has emerged that focuses on enabling spatial reasoning directly from ubiquitous monocular video or 2D observation data. Wherever possible, Huang et al.~\cite{huang2026chat} proposed the Chat-Scene++ framework, which structures complex 3D scenes into context-aware object sequences, enabling object-centric spatial representation and interaction. Similarly, Wu et al.~\cite{wu2026spatial} introduced the novel Spatial-MLLM framework, designed to perform visual-spatial reasoning under the constraint of purely 2D observational data. To completely eliminate the reliance on explicit 3D priors, Chen et al.~\cite{chen2026think} proposed the 3DThinker framework, which effectively extracts and utilizes the rich geometric information implicitly embedded in images to enable 3D reasoning. Furthermore, to rigorously benchmark this paradigm, Yang et al.~\cite{yang2025thinking} constructed VSI-Bench, a visual-spatial intelligence benchmark dataset tailored specifically for video-based spatial reasoning. This monocular video paradigm relies solely on widely available single-camera footage, offering exceptional flexibility and scalability, and is considered a more viable path toward general-purpose open-world spatial intelligence.

\subsection{Path Search and Step Evaluation in LLM Reasoning}
To enhance the complex reasoning capabilities of large language models, Chain-of-Thought enables step-by-step task solving by generating intermediate reasoning steps~\cite{wei2022chain}. Although CoT significantly improves model interpretability and accuracy, its inherently single-path autoregressive generation makes it highly susceptible to inheriting and amplifying biases from earlier stages, rendering it impossible to avoid catastrophic error accumulation. To broaden the search space, the Tree of Thoughts (ToT)~\cite{yao2023tree} extends the reasoning process into a tree-based search, generating multiple candidate lines of thought and filtering them through scoring to achieve a more comprehensive exploration of reasoning. Furthermore, the Graph of Thoughts (GoT)~\cite{besta2024graph} generalizes the reasoning topology to a directed graph, enabling different thinking units to combine and aggregate across paths, thereby enhancing the flexibility of reasoning and the ability to reuse information. However, the scoring of intermediate steps during reasoning in these classical path-search methods primarily relies on the LLM’s own zero-shot prompts. This heuristic evaluation mechanism not only lacks clear training supervision objectives but also often lacks direct statistical correlation between the scores and the quality of the final answer~\cite{lightman2024let}.

To move beyond heuristic evaluation, recent studies have explored explicit supervision for intermediate reasoning quality. Lightman et al.~\cite{lightman2024let} introduced a Process Reward Model (PRM) based on large-scale manual annotation for stepwise verification of mathematical reasoning. However, such annotation is costly and subjective, and its reliance on explicit correctness criteria limits generalization to open-ended reasoning without gold intermediate steps. Math-Shepherd~\cite{wang2024math} reduces annotation costs by generating PRM labels through large-scale sampling, deterministic rules, and LLM-based error localization, but still assumes that intermediate states are directly verifiable. More recently, Wang et al.~\cite{wang2025towards} proposed a Hierarchical Reward Model that estimates state values through online Monte Carlo Tree Search. Although effective in domains with reliable value signals, such methods depend on precise intermediate feedback and are therefore difficult to apply to open-domain reasoning, where intermediate states are uncertain and lack discrete correctness boundaries.

Unlike the methods described above, our approach avoids computationally intensive online search and instead extracts supervised signals through offline-constructed reasoning trees. Rather than attempting to learn the elusive absolute state value, we derive a “Relative Advantage” signal between Competing Reasoning Prefixes and distill it into a lightweight gating module. This gating module enables efficient, precise control of reasoning trajectories during inference without requiring tree search or rule verification.

\section{Policy Improvement Perspective on Step-wise Reasoning}
We formulate step-wise reasoning control from a policy-improvement perspective. Specifically, the reasoning process of an MLLM is viewed as a sequential decision-making procedure, in which a reasoning gate modifies the underlying policy by selectively suppressing intermediate steps with low expected utility for the final prediction. We first introduce an ideal value-based gating policy and establish its policy-improvement interpretation. We then show how reasoning-tree rollouts provide Monte Carlo supervision for approximating the required action values, thereby connecting the idealized formulation to the learned reasoning gate instantiated in our framework.

\subsection{Step-wise Reasoning as a Finite-Horizon Decision Process}
We model step-by-step reasoning as a finite-horizon decision process. At reasoning step $t$, the state is $h_t = (x, q, \tau_{1: t-1})$, where $x$ is the visual input, $q$ is the question, and $\tau_{1: t-1} = (r_1, \ldots, r_{t-1})$ is the accepted reasoning prefix. The original MLLM samples the next reasoning step according to $\pi_0(r_t \mid h_t)$. A complete trajectory is denoted by $\tau_{1: K}$ and receives a bounded terminal return $R(\tau_{1: K})\in[0, 1]$.

The usefulness of a candidate step can be characterized by its true action value under the original reasoning policy:

\begin{equation}
    Q_t^{\pi_0}(h_t, r_t)=\mathbb{E}_{\tau\sim\pi_0}[R(\tau)\mid h_t, r_t],
\end{equation}

\noindent which measures the expected terminal return after selecting $r_t$ and continuing the subsequent reasoning process with $\pi_0$. The corresponding state value is

\begin{equation}
    V_t^{\pi_0}(h_t)=\mathbb{E}_{r_t\sim \pi_0}[Q_t^{\pi_0}(h_t, r_t)].
\end{equation}

Suppose that the true action value is observable. An oracle value gate retains only reasoning steps whose action values exceed a threshold $T$:

\begin{equation}
    \pi_T^*(r_t\mid h_t)=\frac{\pi_0(r_t\mid h_t)\mathds{1}[Q_t^{\pi_0} (h_t, r_t)\geq T]}{Z_T(h_t)},
\end{equation}

\noindent where $Z_T(h_t)= \Pr_{r_t\sim\pi_0}[Q_t^{\pi_0} (h_t, r_t)\geq T]$ denotes the probability mass assigned by $\pi_0$ to reasoning steps satisfying the gating criterion. Whenever $Z_T(h_t)>0$, the gated policy is well-defined as a valid probability distribution.

\begin{proposition}
\label{prop:gating_trajectory_improvement}
\textbf{Ideal Gating Guarantees Trajectory-Level Performance Monotonicity.}

Let $\pi_0$ be the baseline inference policy and $\pi_T^*$ be the ideal gating policy thresholded at $T$. For any history state $h_t \in \mathcal{H}$ with a non-empty acceptance set, the single-step gated policy satisfies:

\begin{equation}
\mathbb{E}_{r_t \sim \pi_T^*(\cdot \mid h_t)} \left[ Q_t^{\pi_0}(h_t, r_t) \right] \ge \mathbb{E}_{r_t \sim \pi_0(\cdot \mid h_t)} \left[ Q_t^{\pi_0}(h_t, r_t) \right]. \label{eq:single_step_bound_main_text}
\end{equation}

Consequently, under zero intermediate rewards ($R(h_t, r_t) = 0$ for $t < K$), the overall expected return of the gated reasoning policy monotonically dominates that of the baseline policy, \textit{i.e.},

\begin{equation}
    J(\pi^*_T)=\mathbb{E}_{\tau\sim\pi^*_T}[R(\tau)] \geq \mathbb{E}_{\tau\sim\pi_0}[R(\tau)]=J(\pi_0).
\end{equation}

\end{proposition}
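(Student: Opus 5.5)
The plan is to prove the single-step inequality \eqref{eq:single_step_bound_main_text} directly from the definition of $\pi_T^*$, and then lift it to the trajectory level with a telescoping (performance-difference) argument over the finite horizon $K$.

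\textbf{Single-step inequality.} Fix $h_t$ with $Z_T(h_t)>0$. Write $Q=Q_t^{\pi_0}(h_t,r_t)$ as a random variable under $r_t\sim\pi_0(\cdot\mid h_t)$, and let $A=\{Q\ge T\}$. By construction, $\pi_T^*(\cdot\mid h_t)$ is $\pi_0(\cdot\mid h_t)$ conditioned on $A$, so the left side of \eqref{eq:single_step_bound_main_text} equals $\mathbb{E}_{\pi_0}[Q\mid A]$. The claim $\mathbb{E}[Q\mid A]\ge\mathbb{E}[Q]$ is equivalent to $\mathrm{Cov}(Q,\mathds{1}_A)\ge 0$. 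I would show this by splitting on $A$ and its complement: on $A$ we have $Q\ge T$, and on $A^c$ we have $Q<T$. Hence $\mathbb{E}[Q\mid A]\ge T\ge \mathbb{E}[Q\mid A^c]$ whenever $\Pr(A^c)>0$. Since $\mathbb{E}[Q]$ is a convex combination of these two conditional means, it lies below $\mathbb{E}[Q\mid A]$. If $\Pr(A^c)=0$, equality holds trivially. In the notation of the statement, this says $\mathbb{E}_{\pi_T^*}[Q_t^{\pi_0}(h_t,\cdot)]\ge V_t^{\pi_0}(h_t)$, i.e. the gated step has nonnegative expected advantage relative to $\pi_0$.

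\textbf{Trajectory-level claim.} For $k=0,\dots,K$, define the hybrid policy $\pi^{(k)}$ that follows $\pi_T^*$ for steps $1,\dots,k$ and $\pi_0$ afterwards. Then $\pi^{(0)}=\pi_0$ and $\pi^{(K)}=\pi_T^*$.

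Because intermediate rewards are zero and the return is terminal, $J(\pi^{(k)})=\mathbb{E}_{h_k\sim\pi^{(k)}}\big[\mathbb{E}_{r_k\sim\pi_T^*}Q_k^{\pi_0}(h_k,r_k)\big]$. Similarly, $J(\pi^{(k-1)})=\mathbb{E}_{h_k\sim\pi^{(k)}}\big[V_k^{\pi_0}(h_k)\big]$. Both identities use the fact that the distribution of $h_k$ is the same under $\pi^{(k)}$ and $\pi^{(k-1)}$, since the two policies agree on steps before $k$. This is just the tower property together with the definitions of $Q^{\pi_0}$ and $V^{\pi_0}$.

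Applying the single-step inequality pointwise in $h_k$ gives $J(\pi^{(k)})\ge J(\pi^{(k-1)})$. Telescoping over $k$ then yields $J(\pi_T^*)\ge J(\pi_0)$. Equivalently, this is the performance-difference identity $J(\pi_T^*)-J(\pi_0)=\sum_t\mathbb{E}_{h_t\sim\pi_T^*}\big[\mathbb{E}_{r_t\sim\pi_T^*}Q_t^{\pi_0}-V_t^{\pi_0}\big]$, in which every summand is nonnegative.

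\textbf{Main obstacle.} The delicate point is well-definedness along gated trajectories. The statement only assumes a non-empty acceptance set at the given $h_t$, but $\pi_T^*$ may reach histories where $Z_T(h_k)=0$. I would handle this by adopting the convention that the gate falls back to $\pi_0$ whenever $Z_T(h_k)=0$; the expected advantage is then exactly zero at such histories, so the telescoping still goes through.

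A second, related point is that $Q^{\pi_0}$, not $Q^{\pi_T^*}$, is used at every step. This is why the argument must go through the hybrid policies, which continue with $\pi_0$ after step $k$, rather than comparing $Q$-functions of the gated policy. Measurability issues in a possibly infinite step space I would dismiss by assuming the step space is countable, or that $Q$ is measurable.
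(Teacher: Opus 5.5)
Your proof is correct. The single-step part is the paper's argument: total expectation over $A$ and $A^c$, with $\mathbb{E}[Q\mid A]\ge T\ge \mathbb{E}[Q\mid A^c]$. The trajectory-level part follows a genuinely different route. The paper runs a backward induction in the style of the classical policy-improvement theorem. Assuming $V_{t+1}^{\pi_T^*}\ge V_{t+1}^{\pi_0}$ at every reachable $h_{t+1}$, the zero-intermediate-reward Bellman recursion gives $V_t^{\pi_T^*}(h_t)\ge \mathbb{E}_{r_t\sim\pi_T^*}[Q_t^{\pi_0}(h_t,r_t)]\ge V_t^{\pi_0}(h_t)$, and $J$ is read off at $t=1$. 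You instead interpolate forward through the hybrid policies $\pi^{(k)}$ and telescope, which amounts to proving the performance-difference identity. The induction yields state-wise dominance $V_t^{\pi_T^*}\ge V_t^{\pi_0}$ from every reachable prefix as a by-product; your argument gives this too if you start the telescoping at $h_t$. Your identity gives more, because it writes $\Delta^*=J(\pi_T^*)-J(\pi_0)$ exactly as accumulated one-step advantage under the gated state distribution. It therefore shows that the gain is strict precisely when, with positive probability, the gated policy reaches a history where the gate rejects positive $\pi_0$-mass. It is also the tool the paper later appeals to, without proof, for the learned-gate bound on $J(\pi_\phi)-J(\pi_0)$. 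Two of your side remarks should be adjusted. First, hybrid policies are not forced on you: in the paper's induction, $V_{t+1}^{\pi_T^*}$ is immediately replaced by $V_{t+1}^{\pi_0}$ via the inductive hypothesis, after which only $Q^{\pi_0}$ appears. Second, your fallback for $Z_T(h_k)=0$ never actually fires in this model. Because $h_{t+1}$ is $h_t$ with $r_t$ appended, an accepted step gives $V_{t+1}^{\pi_0}(h_{t+1})=Q_t^{\pi_0}(h_t,r_t)\ge T$, and a $\pi_0$-mean of at least $T$ forces $Z_T(h_{t+1})>0$. So positivity of $Z_T$ at the initial history propagates to every non-terminal history the gated policy visits. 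This is slightly sharper than the paper, which simply assumes $Z_T>0$ at all $\pi_0$-reachable states.
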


\begin{proof}[Proof Sketch]
The proof proceeds in two main steps: establishing local single-step improvement via conditional expectation decomposition, and propagating this advantage across the trajectory via reverse induction.

\textbf{Single-Step Amplification:} Let $S = Q_t^{\pi_0}(h_t, r_t)$ where $r_t \sim \pi_0(\cdot \mid h_t)$, and let $A = \{S \ge T\}$ denote the acceptance event under the threshold $T$. For $\Pr(A) > 0$, by the law of total expectation:

\[
    \mathbb{E}[S] = \Pr (A) \mathbb{E}[S \mid A] + \Pr(A')\mathbb{E}[S \mid A'].
\]

Since every element in $A$ satisfies $S \ge T > S'$ for any $S' \in A'$, it follows that $\mathbb{E}[S | A] \ge \mathbb{E}[S | A']$, which naturally yields $\mathbb{E}[S | A] \ge \mathbb{E}[S]$. Re-substituting $S$ into the conditional expectation $\mathbb{E}[S | A] = \mathbb{E}_{r_t \sim \pi_T^*(\cdot | h_t)} [Q_t^{\pi_0}(h_t, r_t)]$ directly yields Eq.~\eqref{eq:single_step_bound_main_text}.

\textbf{Trajectory-Level Propagation:} Utilizing the dynamic programming operator, the local advantage at step $t$ propagates backwards. Since $V_{t+1}^{\pi_T^*}(h_{t+1}) \ge V_{t+1}^{\pi_0}(h_{t+1})$ has been established for all reachable future states $h_{t+1}$ at step $t+1$, we have:

\begin{equation}
    \begin{aligned}
    V_t^{\pi^*_T}(h_t) &= \mathbb{E}_{r_t \sim \pi^*_T} \left[ \mathbb{E} \left[ V_{t+1}^{\pi^*_T}(h_{t+1}) \;\middle|\; h_t, r_t \right] \right] \\
    &\ge \mathbb{E}_{r_t \sim \pi^*_T} \left[ \mathbb{E} \left[ V_{t+1}^{\pi_0}(h_{t+1}) \;\middle|\; h_t, r_t \right] \right] \\
    &= \mathbb{E}_{r_t \sim \pi^*_T} \left[ Q_t^{\pi_0}(h_t, r_t) \right] \\
    &\ge V_t^{\pi_0}(h_t).
    \end{aligned}
\end{equation}

By induction from the terminal step $K$ down to $t=1$, we obtain $J(\pi_T^*) = V_1^{\pi_T^*}(h_1) \ge V_1^{\pi_0}(h_1) = J(\pi_0)$. A rigorous and complete formal proof is provided in Appendix~\ref{app:proof_gating_monotonicity}.

\end{proof}

\subsection{Reasoning Tree as Monte Carlo Policy Evaluation}
\label{sec:Reasoning_Tree_MC_Theory}
Since the true action-value function $Q_t^{\pi_0}(h_t, r_t)$ defined above cannot be observed directly, we approximate it via Monte Carlo rollouts constructed on a reasoning tree. 

Specifically, given the current state $h_t$ and a sampled action $r_t$, we generate $m$ independent subsequent reasoning completions: $\tau^{(1)}, \dots, \tau^{(m)} \sim \pi_0(\cdot \mid h_t, r_t)$. Each rollout completes the trajectory to termination and receives a final return $R(\tau^{(j)})$. The empirical Monte Carlo estimate of $Q_t^{\pi_0}(h_t, r_t)$ is then computed as:

\begin{equation}
    \widehat{Q}_m(h_t, r_t) = \frac{1}{m}\sum_{j=1}^m R(\tau^{(j)}).
\end{equation}

By the Strong Law of Large Numbers, this estimator is consistent, satisfying $\widehat{Q}_m(h_t, r_t) \xrightarrow{a.s.} Q_t^{\pi_0}(h_t, r_t)$ as $m \rightarrow \infty$. Furthermore, standard concentration inequalities guarantee that the estimation error decays at the standard Monte Carlo rate of $\mathcal{O}(m^{-1/2})$. Specifically, for a target failure probability $\eta \in (0, 1)$ (corresponding to a confidence level $1-\eta$), the estimation error is bounded with probability at least $1-\eta$ by:

\begin{equation}
\begin{aligned}
\left| \widehat{Q}_m(h_t, r_t) - Q_t^{\pi_0}(h_t, r_t) \right| &\leq \epsilon_{\text{MC}}, \\
\text{where} \quad \epsilon_{\text{MC}} &= \sqrt{\frac{\log (2/\eta)}{2m}}.
\end{aligned}
\label{Eq:epsilon_MC_main_text}
\end{equation}

The formal derivation of this error bound via Hoeffding's inequality is deferred to Appendix~\ref{Sec:Monte_Carlo_Policy_Evaluation}.

\subsection{Approximate Policy Improvement with Learned Gates}
In practical deployment, the ideal value-gated policy $\pi_T^*$ cannot be executed directly, as evaluating true action values $Q_t^{\pi_0}(h_t, r_t)$ requires intractable expectation computations. To bridge this gap, we substitute $Q_t^{\pi_0}$ with its empirical Monte Carlo estimator $\widehat{Q}_m(h_t, r_t)$ defined in Eq.~\eqref{Eq:epsilon_MC_main_text}, yielding an empirical gating operator:

\begin{equation}
    \widehat{g}_m(h_t, r_t) = \mathds{1}[\widehat{Q}_m(h_t, r_t)\geq T].
\end{equation}

Furthermore, we fit a parameterized gating module $g_\phi(h_t, r_t) \in [0, 1]$ to approximate $\widehat{g}_m$. The resulting learned reasoning policy is formulated as:

\begin{equation}
\pi_\phi(r_t \mid h_t) = \frac{\pi_0(r_t \mid h_t) g_\phi(h_t, r_t)}{Z_\phi(h_t)},
\end{equation}

\noindent where $Z_\phi(h_t) = \mathbb{E}_{r_t \sim \pi_0}[g_\phi(h_t, r_t)]$ ensures distribution normalization.

The performance gap between the deployed policy $\pi_\phi$ and the ideal oracle policy $\pi_T^*$ originates from two coupled error sources:

\begin{itemize}
    \item Monte Carlo estimation error induced by finite rollout sampling;
    \item Approximation error of the learnable gating operator. 
\end{itemize}

By decomposing the trajectory-level value difference via standard performance difference lemmas in finite-horizon decision processes, the performance degradation of the learned policy $\pi_\phi$ relative to the baseline $\pi_0$ is lower-bounded by:
\begin{equation}
J(\pi_\phi) - J(\pi_0) \ge \Delta^* - \delta_1 K \sqrt{\frac{\log (2/\eta)}{2m}} - \delta_2 K \epsilon_{\text{gate}},
\label{eq:learned_gate_bound}
\end{equation}

\noindent where $\Delta^* = J(\pi_T^*) - J(\pi_0) \ge 0$ represents the theoretical maximum gain from ideal thresholding, $K$ is the reasoning horizon, and $\delta_1, \delta_2 > 0$ are constants depending on reward bounds and policy margins.

Eq.~\eqref{eq:learned_gate_bound} delivers a clear theoretical insight: as the number of Monte Carlo rollouts increases ($m \to \infty$) and the gating model achieves higher classification accuracy ($\epsilon_{\text{gate}} \to 0$), the empirical policy improvement $J(\pi_\phi) - J(\pi_0)$ converges monotonically toward the ideal oracle bound $\Delta^*$. 

\section{Advantage-Guided Long-Horizon Reasoning Framework}
The ideal gated reasoning policy is intractable because the underlying action-value function is unobservable. To approximate it, we construct Monte Carlo reasoning trees to estimate trajectory returns, train learnable gating modules from sampled trajectories, and perform step-wise gated decoding during inference. As illustrated in Fig.~\ref{fig: Monte_Carlo_Reasoning_Tree_Construction}, we construct Monte Carlo reasoning trees and derive hierarchical supervision through terminal-answer evaluation and bottom-up value propagation.

\begin{figure*}[t]
    \centering
    \includegraphics[width=\linewidth]{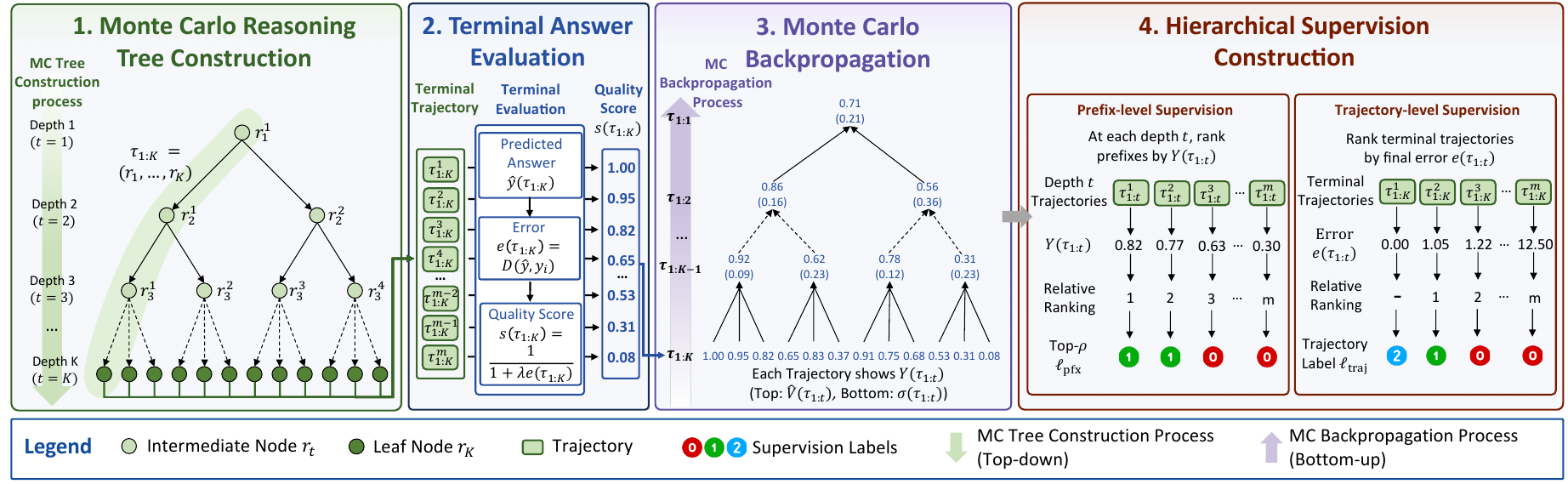}
    \caption{Construction and annotation of the Monte Carlo reasoning tree. The reasoning tree is first expanded top-down by recursively sampling candidate reasoning steps until terminal trajectories are obtained. Each terminal answer is then evaluated against the ground truth and converted into a trajectory-quality score. These terminal scores are propagated bottom-up to estimate the values of intermediate prefixes. Finally, hierarchical supervision is constructed at two levels: prefixes at each depth are ranked by their Monte Carlo estimates to generate Top-\(\rho\) accept/reject labels for SAG, while terminal trajectories are ranked by answer error to generate trajectory-level labels for TAG.}
    \label{fig: Monte_Carlo_Reasoning_Tree_Construction}
    \vspace{-0.5cm}
\end{figure*}

\subsection{Monte Carlo Reasoning Tree Construction}
To obtain the empirical value estimates required for gating supervision, we explicitly formulate the step-by-step generation process as a multi-branch reasoning tree. For a given visual question-answering instance $\mathcal{I}_i = (x_i, q_i, y_i)$ with ground-truth answer $y_i$, we construct a corresponding reasoning tree $\mathcal{T}_i = (\mathcal{V}_i, \mathcal{E}_i)$.

Each node in $\mathcal{V}_i$ corresponds to a specific reasoning state $h_t = (x_i, q_i, \tau_{1:t-1})$. The root node initializes the process with an empty prefix. To expand the tree top-down, we recursively sample candidate reasoning steps from the baseline MLLM policy $\pi_0(\cdot \mid h_t)$. This multi-branch generation continues until a termination signal is produced, forming a leaf node.

The set of all leaf nodes constitutes the terminal trajectories $\mathcal{V}_i^{\text{traj}}$. Each complete trajectory $\tau_{1:K} \in \mathcal{V}_i^{\text{traj}}$ yields a final predicted answer $\hat{y}(\tau_{1:K})$, which is evaluated against the ground-truth $y_i$ to produce a deterministic trajectory-quality score (\textit{i.e.}, the terminal return $R(\tau_{1:K})$).

\subsubsection{Backward Propagation of Terminal Feedback on the Reasoning Tree}
After obtaining the terminal quality of the terminal trajectory, we propagate the terminal quality upward along the tree structure to estimate the oracle value of each intermediate reasoning prefix. For any intermediate prefix $\tau_{1:t}$, let $L(\tau_{1:t})$ denote the set of all reachable leaf nodes in the sampling subtree rooted at prefix $\tau_{1:t}$. The value score of prefix $\tau_{1:t}$ is defined as:

\begin{equation}
    \widehat{V}(\tau_{1:t})
=
\frac{1}{|L(\tau_{1:t})|}\sum_{\tau_{1:K}\in L(\tau_{1:t})} s(\tau_{1:K}).
\end{equation}

It should be noted that $\widehat{V}(\tau_{1:t})$ is not used to determine whether the current reasoning step itself is locally correct, but rather to characterize the long-term impact of the current reasoning state on the ultimate success of the task. Therefore, $\widehat{V}(\tau_{1:t})$ can be viewed as a finite-sample Monte Carlo estimate of the oracle action-value, with supervision derived from statistical feedback over reachable terminal trajectories.

Compared to directly labeling intermediate reasoning steps for local correctness, this long-horizon value estimation based on final result feedback is better suited for open-ended reasoning scenarios. This is because the local validity of intermediate reasoning steps is often difficult to assess independently, whereas their true value typically lies in whether subsequent reasoning continues to lead to high-quality final answers.

Furthermore, to characterize the stability among different subsequent branches under the same reasoning prefix, we introduce uncertainty in the subtree quality distribution:

\[
    \sigma(\tau_{1:t}) = \operatorname{Std}\{s(\tau_{1:K})\mid \tau_{1:K}\in L(\tau_{1:t})\}.
\]

Based on this, we define the exploration-aware value score:

\[
    Y(\tau_{1:t})=\widehat{V}(\tau_{1:t})+\alpha \sigma(\tau_{1:t}),
\]

\noindent where $\alpha$ controls the weight of the uncertainty term, this formulation can be interpreted as an optimistic value estimation, where the uncertainty term characterizes the potential exploratory benefits of the current reasoning state. This design simultaneously encapsulates the expected final quality of the reasoning trajectory and accounts for the potential diversity and optimization space of its subsequent reasoning branches. Compared to value estimation relying exclusively on mean statistics, the introduction of the uncertainty term enables the system to preserve certain reasoning states with high potential utility but substantial branch volatility, thereby mitigating the risk of premature search space constriction and insufficient exploration during open-ended reasoning.

\subsubsection{Hierarchical Supervision Construction}
Based on the exploration-aware value score $Y(\tau_{1:t})$, we further construct hierarchical supervision signals: prefix-level supervision for intermediate reasoning prefixes and trajectory-level supervision for complete reasoning trajectories. Since the scale of value scores varies across different problems, reasoning trees, and depths, we do not directly apply a fixed global threshold. Instead, we perform relative ranking within each tree to align the supervision distributions across different samples.

\textbf{Prefix-level Supervision Construction.}
For a set of intermediate nodes at the same depth in a reasoning tree:
\[
    \mathcal{V}_{i,t}=\{\tau_{1:t}\in \mathcal{V}_i\mid d(\tau_{1:t})=t\}.
\]

\noindent where $d(\cdot)$ represents the depth.

We rank the prefixes based on their exploration-aware value scores $Y(\tau_{1:t})$ and define the prefixes with higher relative rankings as high-value reasoning prefixes:

\begin{equation}
\ell_{1:t}=
\begin{cases}
1, & \tau_{1:t}\in \operatorname{Top}_\rho(\mathcal{V}_{i,t};Y),\\
0, & \text{otherwise}.
\end{cases}
\end{equation}

\noindent where $\ell_{1:t} \in \{0, 1\}$ represents the supervised label of a node, and $\operatorname{Top}_\rho(\cdot)$ denotes the set of the top $\rho$ nodes ranked by $Y(\tau_{1:t})$. This relative-ranking-based supervision scheme offers two advantages. First, it naturally aligns supervision signals across different samples and tasks by relying on relative ordering rather than absolute value scales. Second, it learns the relative advantage among candidate reasoning prefixes under similar reasoning contexts instead of estimating precise values. As a result, the learning objective becomes more stable and easier to optimize, which is particularly beneficial for open-ended reasoning with noisy terminal rewards.

\textbf{Trajectory-level Supervision Construction.}
In addition to the supervision signal for intermediate reasoning prefixes, we further construct trajectory-level supervision for the complete reasoning trajectories corresponding to leaf nodes.

For any terminal trajectory $\tau_{1:K}$, its supervision label is defined as:

\[
    \ell_{1:K} = \mathcal{A}(e(\tau_{1:K}), \operatorname{rank}(\tau_{1:K})),
\]

\noindent where $e(\tau_{1:K})$ represents the error of the final answer, $\operatorname{rank}(\tau_{1:K})$ denotes the relative ranking based on error within the current reasoning tree, and $\mathcal{A}(\cdot)$ denotes the label mapping function. Specifically, trajectory-level labels account for both the correctness of the final answer and its relative quality among candidate trajectories within the tree. Completely correct trajectories are assigned the highest-quality label; for the remaining trajectories, quality is tiered based on their relative error rankings. Consequently, trajectory-level supervision not only captures whether the final answer is correct but also preserves the quality ranking information among different candidate trajectories, thereby forming a more discriminative trajectory-level supervision. For notation simplicity, we denote $\ell_{1:t}$ and $\ell_{1:K}$ as $\ell_\mathrm{pfx}$ and $\ell_\mathrm{traj}$, respectively.

Following the above process, each reasoning tree ultimately generates two types of hierarchical supervision data.

The first type is prefix-level supervision corresponding to intermediate reasoning prefixes:

\[
    \mathcal{D}_{\mathrm{prefix}}=\{(x_i, q_i, \tau_{1:t}, \ell_\mathrm{pfx})\mid \tau_{1:t}\in \mathcal{V}_i \backslash \mathcal{V}_i^{\text{traj}}\}.
\]

The second type is trajectory-level supervision corresponding to complete reasoning trajectories:

\[
\mathcal{D}_{\mathrm{traj}}
=
\{(x_i,q_i,\tau_{1:K},\hat{y}(\tau_{1:K}), \ell_\mathrm{traj})
\mid \tau_{1:K}\in\mathcal{V}_i^{\mathrm{traj}}\}.
\]

Consequently, the original open-ended, multi-branch, and non-deterministic reasoning generation process is transformed into a hierarchical, trajectory-aware supervision framework. Compared to supervision that relies solely on final answers, this approach leverages information from both intermediate reasoning states and final reasoning trajectories, expanding sparse final feedback into a hierarchical supervision signal that spans the entire reasoning trajectory. This provides a more fine-grained learning basis for modeling the quality of long-horizon reasoning.

\begin{figure*}[t]
    \centering
    \includegraphics[width=\linewidth]{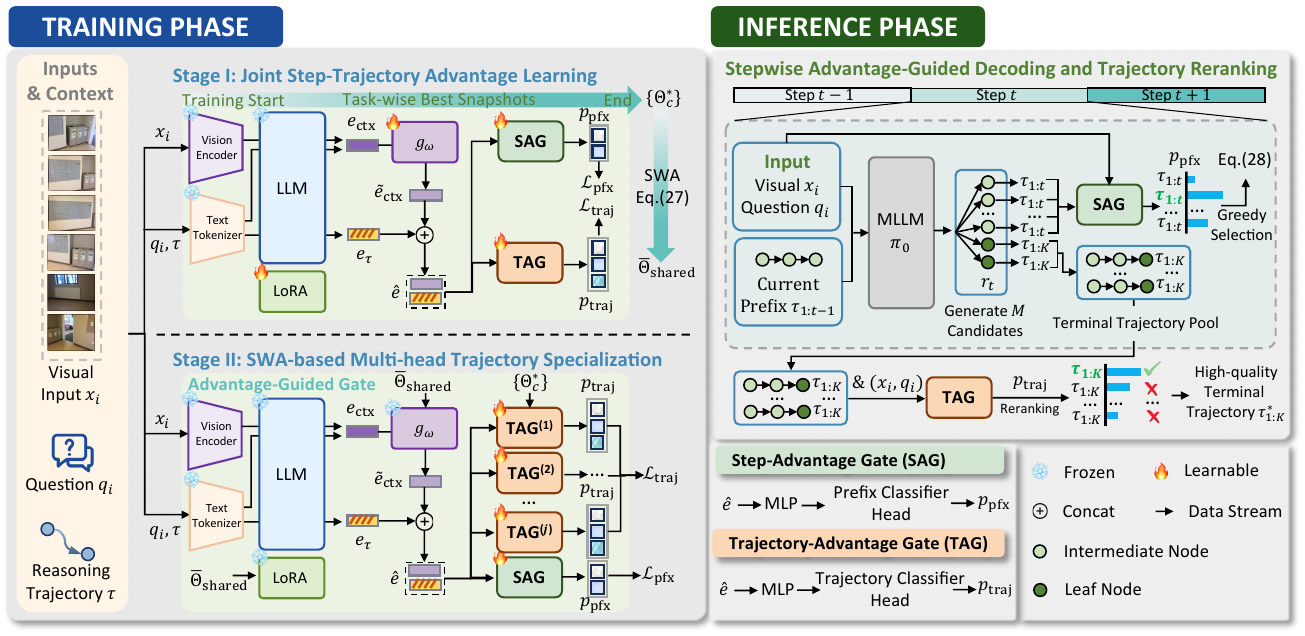}
    \caption{Overview of the proposed Advantage-Guided Gate framework. The training phase consists of two stages. Stage~I jointly optimizes the Step-Advantage Gate (SAG) and Trajectory-Advantage Gate (TAG) using prefix- and trajectory-level supervision, while retaining task-wise best checkpoints. Their shared parameters are aggregated through stochastic weight averaging (SWA) to initialize Stage~II, where the shared representation and SAG are preserved and multiple task-specialized TAG heads are further optimized. During inference, the frozen MLLM generates multiple candidates at each reasoning step, SAG greedily retains the most promising prefix for continued expansion, and TAG reranks the resulting terminal trajectories to select the final high-quality solution. Snowflakes and flames denote frozen and learnable modules, respectively.}
    \label{fig: overview}
    \vspace{-0.5cm}
\end{figure*}

\subsection{Advantage-Guided Gated Learning}
After constructing the reasoning tree, we further train an advantage-guided gating model to filter intermediate reasoning prefixes during the reasoning process and select the most reliable output from multiple complete reasoning answers at the final stage. As illustrated in Fig.~\ref{fig: overview}, our Advantage-Guided Gate framework integrates two-stage advantage learning with stepwise gated decoding and trajectory reranking.

Given a sample $(x_i, q_i)$ and its reasoning prefix $\tau_{1:t}$, we model the reasoning path representation and the visual-problem context representation separately. Given the current reasoning prefix $\tau_{1:t}$, we use an MLLM-based reasoning encoder to extract its dynamic representation:

\[
    \mathbf{e}_{\tau_{1:t}} = f_\theta(\tau_{1:t})
\]

\noindent where $\mathbf{e}_{\tau_{1:t}}$ is used to characterize the semantic reasoning state corresponding to the current reasoning prefix.

Meanwhile, since the visual input and the semantic meaning of the question remain consistent across different reasoning prefixes within the same sample, we further construct static visual-question context representations:

\[
    \mathbf{e}_\mathrm{ctx}=f_{\text{ctx}}(x_i, q_i).
\]

Next, the context representation is mapped to a feature space compatible with the reasoning representation using a lightweight context projection module:

\[
    \tilde{\mathbf{e}}_\mathrm{ctx}=g_\omega(\mathbf{e}_\mathrm{ctx}),
\]

\noindent where $g_\omega$ denotes the context projection module.

Finally, we concatenate the reasoning prefix representation with the projected context representation:

\[
    \hat{\mathbf{e}}=[\mathbf{e}_{\tau_{1:t}}; \tilde{\mathbf{e}}_\mathrm{ctx}].
\]

The fused representation is then passed to prediction branches that estimate the latent value of the intermediate prefix and the final quality of the complete trajectory.

\subsubsection{Stage I: Joint Step-Trajectory Advantage Learning}
Based on the fused representation $\hat{\mathbf{e}}$, we further construct two distinct predictive branches, corresponding to the \textbf{Step-Advantage Gate (SAG)} and the \textbf{Trajectory-Advantage Gate (TAG)}, respectively. Specifically, the Step-Advantage Gate models the long-horizon latent advantage of intermediate reasoning states, whereas the Trajectory-Advantage Gate quantifies the terminal-quality advantage associated with the complete reasoning trajectory.

\textbf{Step-Advantage Gate.} For any intermediate reasoning prefix $\tau_{1:t}$, the Step-Advantage Gate outputs its advantage probability:

\[
    p_\mathrm{pfx}(\ell_\mathrm{pfx})=P_{\phi_\text{SAG}}(\ell_\mathrm{pfx}=1\mid x_i, q_i, \tau_{1:t}),
\]

\noindent where $\ell_\mathrm{pfx}$ denotes the prefix-level supervision label obtained based on the relative advantage ranking within the tree.

Since high-advantage reasoning prefixes typically constitute only a small fraction of the open-ended reasoning space, prefix-level supervision inherently suffers from class distribution imbalance. Therefore, we adopt a weighted binary cross-entropy objective:

\[
    \mathcal{L}_\mathrm{pfx} = \operatorname{WBCE}(p_\mathrm{pfx}(\tau_{1:t}), \ell_\mathrm{pfx}),
\]

\noindent where positive samples correspond to highly advantageous reasoning prefixes with higher exploration-aware value scores $Y(\tau_{1:t})$ in intra-layer reasoning competition, while negative samples correspond to relatively less advantageous prefixes. This supervised approach does not learn absolute reasoning values, but rather learns relative advantage relationships among different reasoning prefixes.

\textbf{Trajectory-Advantage Gate.}
For a complete reasoning trajectory $\tau_{1:K}$ and its final answer $\hat{y}(\tau_{1:K})$, the Trajectory-Advantage Gate outputs a trajectory-level quality distribution:

\[
    p_\mathrm{traj}(\ell_\mathrm{traj})=P_{\phi_\text{TAG}}(\ell_\mathrm{traj}\mid x_i, q_i, \tau_{1:K}, \hat{y}(\tau_{1:K})),
\]

\noindent where $\ell_\mathrm{traj}$ denotes the trajectory-level supervision label.

Similar to the Step-Advantage Gate, since high-advantage trajectories typically constitute only a minority in an open-ended reasoning space, trajectory-level supervision also exhibits an imbalanced distribution. Therefore, we adopt a reweighted cross-entropy objective:

\begin{equation}
    \begin{split}
    \mathcal{L}_{\mathrm{traj}}&
= \\
&-
\sum_{\tau_{1:K}\in\mathcal{D}_{\mathrm{traj}}}
w_{\ell_\mathrm{traj}}
\log
P_{\phi_\text{TAG}}
(
\ell_\mathrm{traj}
\mid
x_i,q_i,\tau_{1:K},\hat{y}(\tau_{1:K})
).
    \end{split}
\end{equation}

\noindent where

\[
    w_{\ell_\mathrm{traj}}
\propto
\frac{1}{N_{\ell_\mathrm{traj}}}.
\]

\noindent where $N_{\ell_\mathrm{traj}}$ denotes the number of training samples belonging to the trajectory label $\ell_\mathrm{traj}$. This reweighting strategy mitigates the bias in the supervision distribution across different trajectory advantage levels, thereby preventing the model from becoming overly biased toward high-frequency, low-advantage trajectories during training.

The joint optimization objective in Stage I is defined as

\[
\mathcal{L}^{(1)}= \mathcal{L}_{\mathrm{pfx}}+\mathcal{L}_{\mathrm{traj}}.
\]

During this stage, the MLLM backbone remains frozen, while only the parameter-efficient adaptation modules, the context projection module $g_{\omega}$, the Step-Advantage Gate, and the Trajectory-Advantage Gate are updated.
By jointly optimizing the prefix-level and trajectory-level supervision signals, Stage I learns a shared long-horizon reasoning representation that captures both the potential value of intermediate reasoning prefixes and the quality of complete reasoning trajectories.

\subsubsection{Task-wise Optimal State Selection}
Since different task types exhibit significant differences in reasoning patterns and trajectory-level answer preferences, they typically achieve optimal performance at different stages of the training process. This suggests that, although the first stage focuses on learning shared long-horizon reasoning representations, different tasks still achieve optimal generalization capabilities at different points along the optimization trajectory.

Based on this observation, during the first training phase, the historical best model snapshots that performed best on the validation set for each task type $c$ will be saved separately:

\[
    \Theta_c^*
=
\arg\max \operatorname{Eval}_c(\Theta^{(1)}),
\quad c\in\mathcal{C}.
\]

\noindent where $\mathcal{C}=\{1,\ldots,C\}$, and $\Theta^{(1)}$ denotes the Stage I optimization trajectory comprising all model states recorded at validation snapshots.

This yields the task-wise best snapshots:

\[
    \mathcal{B} = \{\Theta_c^*\}_{c=1}^C.
\]

It is important to note that these snapshots do not correspond to multiple independent models, but rather represent different optimization states along the same shared optimization trajectory. They reflect the task-wise optimal states achieved when optimal generalization performance is attained for different task types, and will serve as the initialization source for subsequent trajectory specialization.

\subsubsection{Stage II: SWA-based Multi-head Trajectory Specialization}
Given the collection of task-wise optimal states obtained in Stage I, we further perform trajectory-level specialization in Stage II. We first perform stochastic weight averaging (SWA) on the shared parameters of the task-wise optimal states:

\begin{equation}
    \bar{\Theta}_\mathrm{shared}=\frac{1}{C}\sum_{c=1}^C \Theta_{\mathrm{shared}, c}^{*}.
\end{equation}
    
\noindent where these shared parameters include efficient adaptation parameters and context projection modules.

By aggregating the shared parameters of multiple task-wise optimal states, SWA constructs a parameter initialization shared across tasks. This effectively mitigates the optimization bias introduced by single-task optimal states and provides a more stable shared representation space for subsequent trajectory-level specialization. After SWA is completed, the shared parameters are frozen, leaving the Trajectory-Advantage Gate and the Step-Advantage Gate to participate in subsequent optimization.

Building on this foundation, we further employ a multi-head trajectory-specialized architecture, where each head has the same structure but is initialized with different task-wise optimal states.

For the $j$-th trajectory head, its output is defined as:

\[
    p_\mathrm{traj}^{(j)}(\tau_{1:K})=P_{\phi_\text{TAG}^{(j)}}(\ell_\mathrm{traj}\mid x_i, q_i, \tau_{1:K}, \hat{y}(\tau_{1:K})).
\]

\noindent where

\[
    \Theta_c^{*} \rightarrow \phi_\text{TAG}^{(j)},
\]

\noindent indicates that the $j$-th trajectory head is initialized using the Trajectory-Advantage Gate parameters extracted from the corresponding task-wise optimal state.

Therefore, in Stage II,  the parameters $\phi_\text{TAG}^{(j)}$ of all Trajectory-Advantage heads and the Step-Advantage Gate are updated. At this stage, the optimization focus shifts from relearning shared long-horizon reasoning representations to a more structured exploration of the trajectory-level answer-selection space.

At each validation cycle, all trajectory heads are evaluated across task types, and the best historical head and snapshot for each task are updated:
\[
j_c^* 
=
\arg \max_{j} 
\operatorname{Eval}_{c}(\Theta_{j}^{(2)}).
\]

\noindent where $j_c^*$ denotes the index of the trajectory-specialized head that achieves the best validation performance for task type $c$.

It is important to note that the multi-head architecture does not imply a one-to-one correspondence between different heads and specific task types. Instead, all trajectory heads share the full training dataset and are independently optimized starting from different task-wise optimal states. Therefore, the different heads can be viewed as multiple optimization explorations of the trajectory-level answer selection space, thereby enhancing the robustness and generalization ability of the final trajectory selection.

\subsection{Advantage-Guided Decoding and Trajectory Reranking}
During inference, we perform stepwise candidate-level advantage selection. Given the current reasoning prefix $\tau_{1:t}$, the model samples $M$ next-step candidates from the original generation policy:

\[
    \{r_{t+1}^{(m)}\}_{m=1}^M.
\]

Each candidate is concatenated with the current reasoning prefix to form a new candidate prefix:

\[
    \tau_{1:t+1}^{(m)}=(\tau_{1:t}, r_{t+1}^{(m)}).
\]


For candidate prefixes that have generated a final answer, their corresponding complete reasoning trajectories $\tau_{1: t+1}^{(m)}$ and answers $\hat{y}^{(m)}$ are added to the trajectory candidate pool for subsequent trajectory-level quality evaluation. For candidate prefixes that have not yet terminated, they are further fed into the Step-Advantage Gate together with the visual input $x_i$ and the question $q_i$ to evaluate their long-horizon reasoning potential:

\[
    p_\mathrm{pfx}^{(m)} = P_{\phi_\text{SAG}}(\ell_\mathrm{pfx}=1\mid x_i, q_i, \tau_{1:t+1}^{(m)}).
\]

Ultimately, we adopted a greedy policy, selecting the candidate with the highest probability of success as the prefix for the next reasoning step:

\begin{equation}
    \begin{split}
        & m^* = \arg \max_{m\in\{1, \dots, M\}} p_\mathrm{pfx}^{(m)}, \\
& \tau_{1:t+1}=\tau_{1:t+1}^{(m^*)}.
    \end{split}
\end{equation}

This process iterates step by step until the maximum inference depth is reached or a termination token is generated.


For the trajectory candidates ultimately generated in the trajectory candidate pool, the model further employs the Trajectory-Advantage Gate to evaluate the final quality of the fully reasoned trajectories. We input $(x_i, q_i, \tau_{1:K}, \hat{y}(\tau_{1:K}))$ into the optimal Trajectory-Advantage head for the corresponding task type and compute the probability that it belongs to the highest-quality class:

\[
    p_{\mathrm{traj}} = P_{\phi_\text{TAG}}(\ell_\mathrm{traj}=\ell_{\max} \mid x_i, q_i, \tau_{1:K}, \hat{y}(\tau_{1:K})).
\]

The candidate with the highest probability of achieving a high-quality terminal result is selected as the final output:

\begin{equation}
    \tau_{1:K}^* = \arg \max_{\tau_{1:K}}p_\mathrm{traj}.
\end{equation}

Overall, the inference stage does not rely on fixed thresholds to independently accept or reject individual candidates, but rather compares the relative advantages of multiple candidates within the same inference state. The Step-Advantage Gate is responsible for progressively selecting reasoning prefixes with greater long-horizon potential, while the Trajectory-Advantage Gate is responsible for selecting the answer with the highest final-state quality from the complete candidate trajectories.

From a theoretical perspective, this reasoning process corresponds to a stepwise greedy approximation of the advantage policy. According to Equ.~\ref{Eq:oracle_error}, the deviation of the learned policy from the ideal advantage gain $\Delta^*$ primarily stems from Monte Carlo sampling error and gate approximation error. Therefore, the two-stage advantage-guided optimization framework proposed in this paper aims further to reduce $\epsilon_\text{gate}$ through hierarchical advantage supervision and trajectory-level specialization, thereby improving the overall return of the long-horizon reasoning policy.
\section{Experiment}

\subsection{Experimental Settings}

We evaluate the proposed advantage-guided gating method across four 3D scene datasets: ScanNet~\cite{dai2017scannet}, ScanNet++~\cite{yeshwanth2023scannet}, Matterport3D~\cite{chang2017matterport3d}, and HM3D~\cite{ramakrishnan2habitat}. Based on the ground-truth 3D annotations for each dataset, we construct a spatial question-answering task and generate counterfactual questions by combining different question types to reduce the likelihood that the model will rely on perception shortcuts to answer directly. Following the settings of VSI-Bench~\cite{yang2025thinking} and Spatial-MLLM~\cite{wu2026spatial}, the task covers six categories of spatial questions: object counting, absolute distance estimation, relative distance judgment, object size estimation, room size estimation, and relative direction judgment. The visual input for each question consists of multi-frame images of the corresponding scene or room region. ScanNet and ScanNet++ use video keyframes; Matterport3D uses NFOV multi-view images aligned with the region; and HM3D employs room-level multi-view images obtained through coverage-aware camera sampling. Additionally, we constructed the Reasoning-Tree-160K dataset for training gated models. This dataset contains multi-branch reasoning trees along with step-level and trajectory-level supervision signals, used to learn filtering of intermediate reasoning steps and evaluation of final answers.

\noindent \textbf{Evaluation Metrics.} For evaluation metrics, we follow the settings specified by VSI-Bench. For multiple-choice questions, we use accuracy: a score of $1$ is assigned when the predicted option matches the correct answer, and $0$ otherwise. For numerical questions, we use Mean Relative Accuracy (MRA) to evaluate the predictive performance. MRA calculates the average of the relative accuracy across a set of confidence thresholds $\bar{\mathcal{C}}=\{0.50,0.55,\ldots,0.95\}$:
\[
\mathrm{MRA}
\frac{1}{|\bar{\mathcal{C}}|}
\sum_{\bar{\theta}\in\bar{\mathcal{C}}}
\mathbb{I}
\left(
\frac{|\hat{y}-y|}{y}<1-\bar{\theta}
\right),
\]
where $\hat{y}$ and $y$ denote the predicted and true values, respectively, and $\mathbb{I}(\cdot)$ is the indicator function. We report the average scores for each question type separately, as well as the overall average across all question types. Following Spatial-MLLM~\cite{wu2026spatial}, we report Avg. as the micro-averaged score over all evaluation samples.

\noindent \textbf{Implementation Details.} Our gating model employs a two-stage training process, using Qwen3-VL-4B~\cite{bai2025qwen3} as the value estimation backbone network, on which a lightweight gating module is trained. The batch sizes for the two stages are set to 16 and 32, respectively; the gradient accumulation steps are set to 4; the learning rate is set to $2 \times 10^{-4}$; and the number of training iterations is 3 for both stages. In the first stage, we jointly train the shared projection module, LoRA parameters, Step-Advantage Gate (SAG), and Trajectory-Advantage Gate (TAG) after training the vision-language backbone network to estimate the values of the prefix node and the terminal answer node, respectively. We split the training and validation sets in a $9:1$ ratio and select the optimal parameters based on validation set performance. In the second stage, we initialize the model based on the optimal parameters from the first stage: we perform stochastic weighted averaging (SWA) on the LoRA parameters and the shared projection MLP, and initialize independent TAGs for different question types. During training, we freeze the vision-language backbone network, LoRA parameters, and the shared projection MLP, updating only the SAG and the TAGs for each question type. During the testing phase, the prefix node is scored by SAG, while the terminal answer node is scored by inputting the corresponding TAG based on the question type. Based on the ablation results on the number of candidates, each inference step generates 20 next-step candidates by default, with a maximum inference depth of $K=6$.

\begin{table*}[!t]
\centering
\caption{Quantitative comparison on ScanNet. Numerical and multiple-choice questions are evaluated by MRA and accuracy, respectively. Avg. is used for ranking. \textbf{Bold} denotes the best result in each column. Green rows show results with Advantage-Guided Gate, with colored values indicating absolute changes from the corresponding Stepwise baseline. SI denotes spatial instruction tuning. All scores are percentages.}
\resizebox{15.5cm}{!}{
\begin{tabular}{l|cccc|cc|cc}
\toprule 
\multirow{2}{*}{Methods} 
& \multicolumn{4}{c|}{Numerical} 
& \multicolumn{2}{c|}{Multiple-Choice} 
& \multirow{2}{*}{Avg.} 
& \multirow{2}{*}{Rank} \\
\cline{2-7}
& Obj. Cnt. & Abs. Dist. & Obj. Size & Room Size 
& Rel. Dist. & Rel. Dir.
& & \\
\hline
\rowcolor{myblue}
\textit{\textbf{Proprietary Models (API)}} & & & & & & & & \\
Gemini-2.5-pro & 20.4 & 29.2 & 27.8 & 13.9 & 26.8 & 38.0 & 28.9 & 22 \\
Gemini-3.1-pro-preview (CoT) & 37.4 & 23.2 & 42.6 & 0.0 & 46.3 & 53.3 & 40.8 & 4 \\
GPT-5.4 (CoT) & 35.6 & 38.0 & 33.8 & 1.4 & 24.4 & 44.0 & 35.3 & 7 \\
GPT-5.4 (Self-Consistency-20) & 28.2 & 40.8 & 33.4 & 9.3 & 29.3 & 49.3 & 36.5 & 5 \\
ToT (GPT-5.4) & 33.0 & \textbf{52.2} & 35.4 & 6.1 & 39.0 & 48.7 & 40.9 & 3 \\
\hline
\rowcolor{myblue}
\textit{\textbf{Open-source General Models}} & & & & & & & & \\
MiMo-VL-7B-SFT~\cite{li2025xiaomi} & 29.8 & 35.4 & 29.6 & 0.0 & 17.1 & 43.3 & 32.4 & 13 \\
MiMo-VL-7B-RL~\cite{li2025xiaomi} & 25.0 & 20.2 & 34.2 & 2.5 & 24.4 & 42.0 & 30.7 & 20 \\
LLaMA-3.2V-11B~\cite{grattafiori2024llama} & 27.2 & 22.8 & 23.2 & 17.5 & 31.7 & 39.3 & 30.8 & 19 \\
LLaMA-3.2V-11B-CoT~\cite{xu2025llava} & 30.8 & 23.4 & 23.6 & 15.4 & 26.8 & 40.7 & 31.2 & 18 \\
Kimi-VL-16B-A3B~\cite{team2025kimi} & 14.6 & 8.0 & 19.4 & 34.3 & 31.7 & 37.3 & 27.0 & 23 \\
Kimi-VL-16B-A3B-Thinking~\cite{team2025kimi} & 17.0 & 25.2 & 30.0 & 4.3 & 26.8 & 33.3 & 26.6 & 24 \\
Deepseek-VL2-27B-A4.5~\cite{wu2024deepseek} & 35.0 & 16.0 & 35.6 & 11.8 & 31.7 & 37.3 & 31.3 & 17 \\
Qwen3-VL-8B~\cite{bai2025qwen3} & 10.6 & 23.2 & 10.2 & 2.9 & 24.4 & 40.7 & 25.4 & 26 \\
Qwen3-VL-32B~\cite{bai2025qwen3} & 19.8 & 34.4 & 39.8 & 2.9 & 24.4 & 42.7 & 33.0 & 11 \\
LLaVA-NeXT-Video-34B~\cite{zhang2410video} & 10.0 & 17.8 & 38.0 & 1.1 & 19.5 & 27.3 & 22.3 & 28 \\
InternVL3-78B~\cite{zhu2025internvl3} & 24.4 & 40.2 & 37.0 & 39.6 & 19.5 & 32.7 & 32.2 & 14 \\
\hline
\rowcolor{myblue}
\textit{\textbf{Open-source SI Models}} & & & & & & & & \\
SpaceLLaVA-13B~\cite{chen2024spatialvlm} & 26.2 & 21.8 & 25.2 & 3.6 & 26.8 & 30.7 & 25.6 & 25 \\
SpaceR-7B~\cite{ouyang2025spacer} & 9.6 & 35.6 & 29.2 & 17.1 & 24.4 & 36.7 & 29.0 & 21 \\
SenseNova-SI-1.1-Qwen2.5-VL-7B~\cite{cai2026scaling} & 23.2 & 41.6 & 32.0 & 18.6 & 26.8 & 37.3 & 32.7 & 12 \\
SenseNova-SI-1.1-Qwen3-VL-8B~\cite{cai2026scaling} & 2.2 & 34.0 & 14.4 & 10.7 & 14.7 & 38.0 & 24.7 & 27 \\
SenseNova-SI-1.3-InternVL3-8B~\cite{cai2026scaling} & 32.0 & 27.6 & 30.2 & 42.5 & 31.7 & 37.3 & 34.1 & 9 \\
VST-7B-SFT~\cite{yang2025visual} & 27.8 & 23.0 & 43.8 & 15.7 & 24.4 & 36.0 & 31.4 & 16 \\
Spatial-MLLM-v1.1-135K~\cite{wu2026spatial} & 32.6 & 40.4 & 30.4 & 22.5 & 17.1 & 35.3 & 32.0 & 15 \\
Spatial-MLLM-v1.1-820K~\cite{wu2026spatial} & 44.2 & 31.2 & 33.0 & \textbf{53.6} & 22.0 & 34.0 & 35.0 & 8 \\
\hline
GPT-5.4 (Stepwise) & 36.8 & 47.2 & 38.4 & 16.1 & 29.3 & 31.3 & 34.1 & 9 \\
\rowcolor{mygreen}
\textbf{+ Advantage-Guided Gate}
& \textbf{46.6}\upgain{9.8}
& 49.8\upgain{2.6}
& 40.0\upgain{1.6}
& 29.3\upgain{13.2}
& 46.3\upgain{17.0}
& 46.0\upgain{14.7}
& 44.6\upgain{10.5}
& 2 \\
Gemini-3.1-pro-preview (Stepwise) & 41.8 & 33.8 & 40.6 & 15.0 & 39.0 & 36.7 & 36.1 & 6 \\
\rowcolor{mygreen}
\textbf{+ Advantage-Guided Gate}
& 45.2\upgain{3.4}
& 30.8\downdrop{3.0}
& \textbf{45.4}\upgain{4.8}
& 47.1\upgain{32.1}
& \textbf{51.2}\upgain{12.2}
& \textbf{57.3}\upgain{20.6}
& \textbf{49.0}\upgain{14.0}
& 1 \\
\bottomrule
\end{tabular}
}
\label{tab:Main_Results}
\vspace{-0.3cm}
\end{table*}

\subsection{Main Results and Analysis}
Tables ~\ref{tab:Main_Results}–\ref{tab:HM3D} present the main experimental results of the proposed Advantage-Guided Gate method on four 3D scene benchmarks. Overall, the method achieves consistent performance improvements across different base models and datasets. For example, on ScanNet, Advantage-Guided Gate boosts the overall scores of GPT-5.4~\cite{singh2025openai} and Gemini-3.1-pro-preview~\cite{googledeepmind2026gemini31pro} from 34.1 and 36.1 to 44.6 and 49.0, respectively; on ScanNet++, GPT-5.4’s overall score increased from 28.7 to 43.8, representing a gain of 15.1 percentage points. Similar improvement trends were also observed on Matterport3D and HM3D, indicating that the proposed method is not limited to specific base models or data distributions but can adapt to different scene structures, visual observation patterns, and spatial distribution characteristics.

\begin{table*}[!t]
\centering
\caption{Quantitative comparison on ScanNet++. Numerical and multiple-choice questions are evaluated by MRA and accuracy, respectively. Avg. is used for ranking. \textbf{Bold} denotes the best result in each column. Green rows show results with Advantage-Guided Gate, with colored values indicating absolute changes from the corresponding Stepwise baseline. SI denotes spatial instruction tuning. All scores are percentages.}
\resizebox{15.5cm}{!}{
\begin{tabular}{l|cccc|cc|cc}
\toprule 
\multirow{2}{*}{Methods} 
& \multicolumn{4}{c|}{Numerical} 
& \multicolumn{2}{c|}{Multiple-Choice} 
& \multirow{2}{*}{Avg.} 
& \multirow{2}{*}{Rank} \\
\cline{2-7}
& Obj. Cnt. & Abs. Dist. & Obj. Size & Room Size 
& Rel. Dist. & Rel. Dir.
& & \\
\hline
\rowcolor{myblue}
\textit{\textbf{Proprietary Models (API)}} & & & & & & & & \\
Gemini-2.5-pro & 20.6 & 21.0 & 38.4 & 11.4 & 30.0 & 32.0 & 27.2 & 17 \\
Gemini-3.1-pro-preview (CoT) & 26.4 & 31.4 & 31.4 & 3.8 & 30.0 & 45.3 & 32.4 & 4 \\
GPT-5.4 (CoT) & 30.4 & 39.6 & 29.2 & 6.0 & 28.0 & 36.7 & 30.4 & 8 \\
GPT-5.4 (Self-Consistency) & 28.2 & 40.8 & 29.6 & 0.6 & 30.0 & 32.0 & 28.2 & 13 \\
ToT (GPT-5.4) & 31.0 & 33.2 & 30.2 & 10.6 & 26.0 & 32.7 & 28.6 & 11 \\
\hline
\rowcolor{myblue}
\textit{\textbf{Open-source General Models}} & & & & & & & & \\
MiMo-VL-7B-SFT~\cite{li2025xiaomi} & 21.8 & 18.8 & 31.6 & 0.0 & 36.0 & 39.3 & 28.3 & 12 \\
MiMo-VL-7B-RL~\cite{li2025xiaomi} & 26.0 & 14.4 & 22.0 & 0.0 & 26.0 & 36.0 & 24.6 & 22 \\
LLaMA-3.2V-11B~\cite{grattafiori2024llama} & 23.0 & 10.8 & 16.8 & 21.0 & 26.0 & 28.0 & 22.7 & 26 \\
LLaMA-3.2V-11B-CoT~\cite{xu2025llava} & 21.0 & 14.4 & 19.6 & 17.6 & 22.0 & 33.3 & 24.3 & 25 \\
Kimi-VL-16B-A3B~\cite{team2025kimi} & 24.2 & 9.6 & 22.4 & 24.6 & 24.0 & 30.7 & 24.6 & 22 \\
Kimi-VL-16B-A3B-Thinking~\cite{team2025kimi} & 17.8 & 30.0 & 16.0 & 9.8 & 16.0 & 39.3 & 26.0 & 21 \\
Deepseek-VL2-27B-A4.5~\cite{wu2024deepseek} & 17.4 & 9.4 & 27.2 & 25.8 & 28.0 & 34.7 & 26.5 & 19 \\
Qwen3-VL-8B~\cite{bai2025qwen3} & 13.0 & 15.0 & 5.4 & 0.0 & 24.0 & 36.7 & 20.9 & 28 \\
Qwen3-VL-32B~\cite{bai2025qwen3} & 11.4 & 38.6 & 28.8 & 17.8 & 26.0 & 34.0 & 28.1 & 14 \\
LLaVA-NeXT-Video-34B~\cite{zhang2410video} & 17.0 & 12.2 & 21.4 & 5.0 & 18.0 & 34.7 & 22.2 & 27 \\
InternVL3-78B~\cite{zhu2025internvl3} & 21.4 & 22.8 & 32.2 & 42.2 & 28.0 & 34.7 & 31.3 & 6 \\
\hline
\rowcolor{myblue}
\textit{\textbf{Open-source SI Models}} & & & & & & & & \\
SpaceLLaVA-13B~\cite{chen2024spatialvlm} & 26.2 & 23.0 & 16.0 & 6.6 & 26.0 & 41.3 & 27.7 & 15 \\
SpaceR-7B~\cite{ouyang2025spacer} & 13.0 & 25.0 & 33.2 & 16.6 & 22.0 & 35.3 & 27.0 & 18 \\
SenseNova-SI-1.1-Qwen2.5-VL-7B~\cite{cai2026scaling} & 15.6 & 21.6 & 23.8 & 16.0 & 22.0 & 40.0 & 27.4 & 16 \\
SenseNova-SI-1.1-Qwen3-VL-8B~\cite{cai2026scaling} & 4.0 & 20.0 & 18.8 & 11.4 & 34.0 & 36.0 & 24.5 & 24 \\
SenseNova-SI-1.3-InternVL3-8B~\cite{cai2026scaling} & 32.8 & 20.0 & 23.2 & 32.8 & 24.0 & 39.3 & 31.3 & 6 \\
VST-7B-SFT~\cite{yang2025visual} & 19.4 & 27.2 & 24.2 & 2.4 & 26.0 & 36.7 & 26.2 & 20 \\
Spatial-MLLM-v1.1-135K~\cite{wu2026spatial} & 27.2 & 34.6 & 16.6 & 27.0 & 20.0 & 34.7 & 28.7 & 9 \\
Spatial-MLLM-v1.1-820K~\cite{wu2026spatial} & 42.8 & 30.0 & 23.8 & \textbf{42.4} & 18.0 & 34.0 & 32.4 & 4 \\
\hline
GPT-5.4 (Stepwise) & 30.6 & 33.4 & 31.0 & 4.2 & 36.0 & 31.3 & 28.7 & 9 \\
\rowcolor{mygreen}
\textbf{+ Advantage-Guided Gate} & \textbf{45.8}\upgain{15.2} & 38.4\upgain{5.0} & 38.4\upgain{7.4} & 39.4\upgain{35.2} & \textbf{44.0}\upgain{8.0} & 48.0\upgain{16.7} & 43.8\upgain{15.1} & 2 \\
Gemini-3.1-pro-preview (Stepwise) & 42.6 & 43.2 & 29.2 & 15.2 & 40.0 & 30.7 & 34.0 & 3 \\
\rowcolor{mygreen}
\textbf{+ Advantage-Guided Gate} & 42.6\zerogain{0.0} & \textbf{44.0}\upgain{0.8} & \textbf{43.6}\upgain{14.4} & 32.6\upgain{17.4} & 40.0\zerogain{0.0} & \textbf{53.3}\upgain{22.6} & \textbf{45.4}\upgain{11.4} & 1 \\
\bottomrule

\end{tabular}
}
\label{tab:ScanNetPP}
\vspace{-0.5cm}
\end{table*}

Compared with reasoning-time augmentation methods such as CoT~\cite{wei2022chain}, Self-Consistency~\cite{wangself}, and Tree-of-Thought (ToT)~\cite{yao2023tree}, the key distinction of the method proposed in this paper lies in its explicit modeling of the selection mechanism for candidate reasoning steps, aligning its optimization objective with the final task return. CoT relies on the autoregressive expansion of a single reasoning chain and is susceptible to the propagation of early errors; Self-Consistency improves prediction stability through multiple sampling and answer aggregation, but does not directly evaluate or intervene in intermediate reasoning steps; although ToT introduces branch expansion and node filtering, its selection process typically relies on model self-evaluation, prompt scoring, or predefined search rules, and thus retains a strong heuristic nature—local scores may not accurately reflect the actual contribution of the current step to the quality of the final answer. Consequently, expanding the candidate search space may still retain branches that appear reasonable on the surface but fail to lead to the correct answer. In contrast, we construct step-level and trajectory-level supervision based on the quality of subsequently reachable answers in the reasoning tree, and estimate the values of prefix nodes and terminal answer nodes via SAG and TAG, respectively, thereby transforming candidate screening into a supervised learning problem focused on the quality of the final answer. The performance advantages of ToT over other methods demonstrate that the key to spatial reasoning search lies not only in the generation and expansion of candidate paths, but also in the ability to effectively select candidates based on clear objectives aligned with task returns.

\begin{table*}[!t]
\centering
\caption{Quantitative comparison on Matterport3D. Numerical and multiple-choice questions are evaluated by MRA and accuracy, respectively. Avg. is used for ranking. \textbf{Bold} denotes the best result in each column. Green rows show results with Advantage-Guided Gate, with colored values indicating absolute changes from the corresponding Stepwise baseline. SI denotes spatial instruction tuning. All scores are percentages.}
\resizebox{15.5cm}{!}{
\begin{tabular}{l|cccc|cc|cc}
\toprule
\multirow{2}{*}{Methods}
& \multicolumn{4}{c|}{Numerical}
& \multicolumn{2}{c|}{Multiple-Choice}
& \multirow{2}{*}{Avg.}
& \multirow{2}{*}{Rank} \\
\cline{2-7}
& Obj. Cnt. & Abs. Dist. & Obj. Size & Room Size
& Rel. Dist. & Rel. Dir.
& & \\
\hline
\rowcolor{myblue}
\textit{\textbf{Proprietary Models (API)}} & & & & & & & & \\
Gemini-2.5-pro & 23.8 & 30.0 & 30.8 & 1.4 & 34.0 & 35.3 & 28.3 & 9 \\
Gemini-3.1-pro-preview (CoT) & 33.2 & 25.0 & 36.8 & 2.8 & 36.0 & 40.7 & 32.0 & 6 \\
GPT-5.4 (CoT) & 11.2 & 29.0 & 30.8 & 1.6 & 22.0 & 41.3 & 27.3 & 15 \\
GPT-5.4 (Self-Consistency) & 14.2 & 27.4 & 29.4 & 0.0 & 26.0 & 38.0 & 26.4 & 17 \\
ToT (GPT-5.4) & 9.0 & 35.2 & 32.2 & 9.8 & 38.0 & 34.0 & 28.3 & 9 \\

\hline
\rowcolor{myblue}
\textit{\textbf{Open-source General Models}} & & & & & & & & \\
MiMo-VL-7B-SFT~\cite{li2025xiaomi} & 27.2 & 23.8 & 27.6 & 0.0 & 34.0 & 37.3 & 28.1 & 11 \\
MiMo-VL-7B-RL~\cite{li2025xiaomi} & 30.0 & 15.4 & 17.0 & 0.0 & 30.0 & 34.0 & 24.3 & 21 \\
LLaMA-3.2V-11B~\cite{grattafiori2024llama} & 21.0 & 12.6 & 28.6 & 6.2 & 12.0 & 43.0 & 26.6 & 16 \\
LLaMA-3.2V-11B-CoT~\cite{xu2025llava} & 13.0 & 7.8 & 17.8 & 14.4 & 24.0 & 31.3 & 21.4 & 27 \\
Kimi-VL-16B-A3B~\cite{team2025kimi} & 12.2 & 9.6 & 20.6 & 21.6 & 24.0 & 29.3 & 22.0 & 26 \\
Kimi-VL-16B-A3B-Thinking~\cite{team2025kimi} & 23.4 & 23.0 & 30.0 & 10.2 & 32.0 & 34.0 & 27.6 & 13 \\
Deepseek-VL2-27B-A4.5~\cite{wu2024deepseek} & 20.4 & 4.8 & 26.4 & 20.0 & 24.0 & 41.3 & 27.5 & 14 \\
Qwen3-VL-8B~\cite{bai2025qwen3} & 12.4 & 10.6 & 10.4 & 0.0 & 14.0 & 36.7 & 19.7 & 28 \\
Qwen3-VL-32B~\cite{bai2025qwen3} & 9.8 & 31.6 & 27.0 & 3.6 & 18.0 & 30.7 & 22.8 & 24 \\
LLaVA-NeXT-Video-34B~\cite{zhang2410video} & 12.6 & 5.8 & 24.0 & 2.2 & 24.0 & 36.7 & 22.3 & 25 \\
InternVL3-78B~\cite{zhu2025internvl3} & 7.8 & 26.6 & 24.2 & 22.4 & 26.0 & 32.0 & 25.4 & 20 \\

\hline
\rowcolor{myblue}
\textit{\textbf{Open-source SI Models}} & & & & & & & & \\
SpaceLLaVA-13B~\cite{chen2024spatialvlm} & 23.2 & 13.0 & 19.8 & 4.8 & 28.0 & 32.7 & 23.3 & 23 \\
SpaceR-7B~\cite{ouyang2025spacer} & 6.0 & 11.2 & 34.0 & \textbf{23.4} & 14.0 & 35.3 & 24.3 & 21 \\
SenseNova-SI-1.1-Qwen2.5-VL-7B~\cite{cai2026scaling} & 13.8 & 18.2 & 16.6 & 17.0 & 24.0 & 38.7 & 25.7 & 19 \\
SenseNova-SI-1.1-Qwen3-VL-8B~\cite{cai2026scaling} & 9.8 & 27.8 & 28.4 & 0.8 & 24.0 & 38.7 & 25.9 & 18 \\
SenseNova-SI-1.3-InternVL3-8B~\cite{cai2026scaling} & 21.0 & 29.8 & 33.0 & 22.8 & 14.0 & 44.7 & 31.8 & 7 \\
VST-7B-SFT~\cite{yang2025visual} & 26.0 & 14.0 & 33.4 & 4.8 & 24.0 & 42.0 & 28.5 & 8 \\
Spatial-MLLM-v1.1-135K~\cite{wu2026spatial} & 28.4 & 37.2 & 41.6 & 13.8 & 12.0 & 44.7 & 33.4 & 4 \\
Spatial-MLLM-v1.1-820K~\cite{wu2026spatial} & 23.8 & 36.6 & \textbf{42.8} & 14.8 & 24.0 & 38.7 & 32.3 & 5 \\

\hline
GPT-5.4 (Stepwise) & 24.2 & 39.6 & 31.4 & 10.2 & 16.0 & 33.3 & 27.7 & 12 \\
\rowcolor{mygreen}
\textbf{+ Advantage-Guided Gate}
& 14.8\downdrop{9.4}
& 41.6\upgain{2.0}
& 38.6\upgain{7.2}
& 14.6\upgain{4.4}
& 36.0\upgain{20.0}
& 43.3\upgain{10.0}
& 34.5\upgain{6.8}
& 3 \\
Gemini-3.1-pro-preview (Stepwise) & \textbf{33.2} & 31.2 & 31.4 & 10.8 & 48.0 & 41.3 & 34.8 & 2 \\
\rowcolor{mygreen}
\textbf{+ Advantage-Guided Gate}
& 30.6\downdrop{2.6}
& \textbf{42.6}\upgain{11.4}
& 38.0\upgain{6.4}
& 11.8\upgain{1.0}
& \textbf{54.0}\upgain{6.0}
& \textbf{50.7}\upgain{9.4}
& \textbf{41.9}\upgain{7.1}
& 1 \\

\bottomrule
\end{tabular}
}
\label{tab:Matterport3D}
\vspace{-0.3cm}
\end{table*}

The method described in this paper adopts a different capability enhancement paradigm from spatial instruction fine-tuning models such as SpaceLLaVA~\cite{chen2024spatialvlm}, SpaceR~\cite{ouyang2025spacer}, SenseNova-SI~\cite{cai2026scaling}, VST~\cite{yang2025visual}, and Spatial-MLLM~\cite{wu2026spatial}. The aforementioned methods primarily update model parameters using spatial instruction data, encoding spatial knowledge and task patterns into the model itself; the coverage of the training data often influences their performance, the distribution of tasks, and the architecture of the base model. In contrast, this paper does not update the original parameters of the evaluated MLLM. Instead, it uses a gated model as a plug-and-play inference control module to perform value evaluation and dynamic selection within the candidate spaces generated by the base model. Specifically, the gated versions of GPT-5.4 and Gemini-3.1-pro-preview outperformed the best-performing spatial instruction fine-tuned models across all four benchmarks, demonstrating that general-purpose MLLMs can achieve competitive spatial reasoning performance simply by reshaping the selection distribution of candidate reasoning paths. This consistent advantage further illustrates that the generative space of existing MLLMs already contains a certain proportion of high-quality spatial reasoning trajectories; 
however, without a reliable mechanism for trajectory identification and selection, these potentially correct trajectories are difficult to retain and convert into consistent final predictions.

\begin{table*}[!t]
\centering
\caption{Quantitative comparison on HM3D. Numerical and multiple-choice questions are evaluated by MRA and accuracy, respectively. Avg. is used for ranking. \textbf{Bold} denotes the best result in each column. Green rows show results with Advantage-Guided Gate, with colored values indicating absolute changes from the corresponding Stepwise baseline. SI denotes spatial instruction tuning. All scores are percentages.}
\resizebox{15.5cm}{!}{
\begin{tabular}{l|cccc|cc|cc}
\toprule
\multirow{2}{*}{Methods}
& \multicolumn{4}{c|}{Numerical}
& \multicolumn{2}{c|}{Multiple-Choice}
& \multirow{2}{*}{Avg.}
& \multirow{2}{*}{Rank} \\
\cline{2-7}
& Obj. Cnt. & Abs. Dist. & Obj. Size & Room Size
& Rel. Dist. & Rel. Dir.
& & \\
\hline
\rowcolor{myblue}
\textit{\textbf{Proprietary Models (API)}} & & & & & & & & \\
Gemini-2.5-pro & 15.8 & 17.0 & 33.8 & 10.2 & 30.0 & 35.3 & 26.6 & 10 \\
Gemini-3.1-pro-preview (CoT) & 19.0 & 20.8 & 30.4 & 2.6 & 38.0 & 40.0 & 28.9 & 6 \\
GPT-5.4 (CoT) & 14.0 & 18.0 & 20.0 & 11.8 & 26.0 & 34.7 & 24.2 & 20 \\
GPT-5.4 (Self-Consistency) & 11.0 & 18.6 & 20.2 & 0.6 & 22.0 & 38.7 & 23.5 & 22 \\
ToT (GPT-5.4) & 13.8 & 16.2 & 27.2 & 10.0 & 40.0 & 44.0 & 29.9 & 3 \\

\hline
\rowcolor{myblue}
\textit{\textbf{Open-source General Models}} & & & & & & & & \\
MiMo-VL-7B-SFT~\cite{li2025xiaomi} & 15.4 & 4.0 & 33.8 & 0.0 & 28.0 & 40.7 & 25.4 & 14 \\
MiMo-VL-7B-RL~\cite{li2025xiaomi} & 17.8 & 9.4 & 25.4 & 0.0 & 14.0 & 31.3 & 20.1 & 25 \\
LLaMA-3.2V-11B~\cite{grattafiori2024llama} & 13.0 & 7.4 & 20.2 & 11.0 & 30.0 & 35.3 & 23.5 & 22 \\
LLaMA-3.2V-11B-CoT~\cite{xu2025llava} & 8.4 & 4.2 & 17.2 & 16.6 & 22.0 & 38.7 & 25.5 & 13 \\
Kimi-VL-16B-A3B~\cite{team2025kimi} & 13.2 & 4.4 & 17.8 & 20.8 & 20.0 & 27.3 & 19.8 & 26 \\
Kimi-VL-16B-A3B-Thinking~\cite{team2025kimi} & 10.2 & 7.0 & 31.8 & 15.2 & 22.0 & 38.0 & 25.0 & 18 \\
Deepseek-VL2-27B-A4.5~\cite{wu2024deepseek} & 22.0 & 7.4 & 24.6 & 26.8 & 22.0 & 33.3 & 25.4 & 14 \\
Qwen3-VL-8B~\cite{bai2025qwen3} & 7.2 & 10.6 & 8.0 & 0.0 & 22.0 & 34.7 & 19.0 & 28 \\
Qwen3-VL-32B~\cite{bai2025qwen3} & 7.4 & 20.2 & 18.6 & 10.8 & 16.0 & 35.3 & 22.4 & 24 \\
LLaVA-NeXT-Video-34B~\cite{zhang2410video} & 9.0 & 5.2 & 21.0 & 0.0 & 16.0 & 35.3 & 19.7 & 27 \\
InternVL3-78B~\cite{zhu2025internvl3} & 5.4 & 15.2 & 26.6 & \textbf{27.6} & 40.0 & 32.7 & 26.6 & 10 \\

\hline
\rowcolor{myblue}
\textit{\textbf{Open-source SI Models}} & & & & & & & & \\
SpaceLLaVA-13B~\cite{chen2024spatialvlm} & 21.6 & 12.6 & 15.6 & 5.2 & 36.0 & 38.7 & 25.9 & 12 \\
SpaceR-7B~\cite{ouyang2025spacer} & 7.6 & 13.0 & 31.8 & 12.4 & 24.0 & 37.3 & 25.1 & 17 \\
SenseNova-SI-1.1-Qwen2.5-VL-7B~\cite{cai2026scaling} & 19.0 & 16.0 & 23.2 & 15.0 & 30.0 & 42.7 & 28.9 & 6 \\
SenseNova-SI-1.1-Qwen3-VL-8B~\cite{cai2026scaling} & 5.6 & 11.0 & 20.0 & 12.8 & 24.0 & 41.3 & 24.7 & 19 \\
SenseNova-SI-1.3-InternVL3-8B~\cite{cai2026scaling} & 23.8 & 11.8 & 32.6 & 12.4 & 32.0 & 40.0 & 29.1 & 5 \\
VST-7B-SFT~\cite{yang2025visual} & 21.8 & 15.4 & 28.2 & 5.0 & 10.0 & 36.0 & 23.6 & 21 \\
Spatial-MLLM-v1.1-135K~\cite{wu2026spatial} & 18.0 & 15.0 & 29.0 & 13.8 & 20.0 & 39.3 & 26.7 & 9 \\
Spatial-MLLM-v1.1-820K~\cite{wu2026spatial} & 23.8 & 20.4 & 27.0 & 4.4 & 26.0 & 44.0 & 29.2 & 4 \\

\hline
GPT-5.4 (Stepwise) & 11.2 & 18.2 & 24.0 & 12.4 & 34.0 & 34.0 & 25.2 & 16 \\
\rowcolor{mygreen}
\textbf{+ Advantage-Guided Gate}
& 18.0\upgain{6.8}
& 22.8\upgain{4.6}
& \textbf{35.2}\upgain{11.2}
& 22.8\upgain{10.4}
& \textbf{48.0}\upgain{14.0}
& 46.3\upgain{12.3}
& 36.1\upgain{10.9}
& 2 \\
Gemini-3.1-pro-preview (Stepwise) & 25.6 & 21.4 & 25.8 & 3.2 & 38.0 & 33.3 & 26.8 & 8 \\
\rowcolor{mygreen}
\textbf{+ Advantage-Guided Gate}
& \textbf{29.8}\upgain{4.2}
& \textbf{23.0}\upgain{1.6}
& 34.0\upgain{8.2}
& 15.4\upgain{12.2}
& 40.0\upgain{2.0}
& \textbf{53.0}\upgain{19.7}
& \textbf{38.0}\upgain{11.2}
& 1 \\

\bottomrule
\end{tabular}
}
\label{tab:HM3D}
\end{table*}

Task-level results further reveal the differentiated benefits of advantage-guided gating across different spatial reasoning tasks. Across four datasets and two baseline models, the average gain from gating reached 15.7 and 15.8 percentage points for room size estimation and relative direction judgment, respectively, and also achieved improvements of 9.9 and 7.7 percentage points for relative distance judgment and object size estimation; in contrast, the average gains for object counting and absolute distance estimation were only 3.4 and 3.1 percentage points, respectively. This discrepancy is closely related to the primary sources of error for different tasks: the former typically rely on the aggregation of cross-view information and the inference of continuous scale or spatial relationships, and are therefore more susceptible to the accumulation of intermediate errors; SAG’s filtering of low-value prefixes and TAG’s final evaluation of complete trajectories can effectively mitigate such error propagation; In contrast, object counting and absolute distance estimation are more heavily influenced by underlying perception errors such as object omissions, cross-view instance association, and visual scale calibration; therefore, the benefits achievable solely through reasoning-based path selection are relatively limited. The above results indicate that the method proposed in this paper primarily enhances candidate selection and error control capabilities during multi-step reasoning, rather than replacing underlying visual perception and spatial measurement.

Cross-dataset results further validate the method’s adaptability to different scene representations and visual acquisition methods. ScanNet and ScanNet++ use video keyframes from real-world scan sequences as visual input. In contrast, Matterport3D and HM3D use region-aligned NFOV multi-view images and room-level multi-view images, respectively, obtained through coverage-aware sampling. These datasets exhibit significant differences in viewpoint continuity, image overlap, scene complexity, and spatial distribution. Nevertheless, answer-guided gating consistently achieves overall gains across all four benchmarks, indicating that it learns not fixed heuristic rules dependent on a specific dataset or visual sampling method, but rather a relatively stable correlation between reasoning steps and the quality of the final answer. In other words, as long as the base model can generate candidate inference paths with a certain degree of diversity, the proposed method can use value estimation to increase the probability that high-quality paths are retained and ultimately selected.

In summary, the results of the main experiment support the core hypothesis of this paper: the performance bottleneck in open-space reasoning lies not only in generating potentially correct reasoning paths, but also in continuously identifying and retaining high-value branches from a large-scale candidate space. By integrating the screening of intermediate steps with the selection of the final answer into a value-learning framework driven by terminal returns, the method proposed in this paper suppresses the expansion of low-quality reasoning branches, reshapes the effective output distribution of the foundation model, and increases the relative proportion of high-quality reasoning trajectories in the final prediction.

\begin{table}[t]
\centering
\caption{Ablation study of the proposed framework. The upper part
evaluates the contributions of SAG and TAG, while the lower part
analyzes the shared initialization and TAG specialization designs.
Unless otherwise specified, TAG adopts the default multi-head
architecture.}
\resizebox{8.5cm}{!}{
\label{tab:ablation}
\setlength{\tabcolsep}{10pt}
\begin{tabular}{lcccc}
\toprule
Variant & SAG & TAG & Avg. & $\Delta$ \\
\midrule

\multicolumn{5}{l}{\textit{Gating component analysis}} \\

Stepwise baseline & \ding{55} & \ding{55} & 34.1 & -- \\
SAG only & \ding{51} & \ding{55} & 35.8 & +1.7 \\
TAG only & \ding{55} & \ding{51} & 37.5 & +3.4 \\
Full model & \ding{51} & \ding{51} & \textbf{44.6} & +10.5 \\

\midrule

\multicolumn{5}{l}{\textit{Initialization and head design}} \\

w/o SWA shared initialization & \ding{51} & \ding{51} & 43.3 & -1.3 \\
TAG with a single head & \ding{51} & \ding{51} & 40.1 & -4.5 \\
Full model & \ding{51} & \ding{51} & \textbf{44.6} & -- \\

\bottomrule
\end{tabular}}
\vspace{-0.5cm}
\end{table}

\subsection{Ablation on Gating Components and Task Specialization}
Table ~\ref{tab:ablation} analyzes the contributions of various gating components and training designs to overall performance. When SAG and TAG were introduced separately, the overall scores improved by 1.7 and 3.4 percentage points, respectively, compared to the Stepwise baseline, indicating that both have independent effects but differ fundamentally in their optimization objectives. SAG focuses on the reasoning expansion process; its primary goal is not to directly select the final answer, but to increase the density of high-quality reasoning trajectories in the candidate solution space by continuously filtering out low-value prefixes. As shown in the theoretical analysis in Appendix~\ref{appendix:open-ended_reasoning_space_reshaping}, SAG can contract the original open-ended reasoning space into a higher-quality feasible subspace. However, an increase in the number of high-quality answers in the candidate set does not necessarily translate directly into final accuracy; in the absence of a reliable mechanism for selecting the final answer, even if the proportion of high-quality answers increases significantly, the final prediction may still fail to select the correct answer. 
Therefore, SAG alone yields only modest direct gains.

TAG, on the other hand, directly evaluates the value of the complete inference trajectory and its final answer, thereby achieving a more pronounced performance improvement within the original candidate space. However, due to the limited number of high-quality trajectories in the original candidate space, TAG’s selection ceiling remains constrained by the initial quality of the candidates. When SAG and TAG are enabled simultaneously, the overall score reaches 44.6, representing a 10.5 improvement over the baseline and significantly exceeding the gains achieved when either module is used alone. This result reveals the complementary relationship between the two: SAG first increases the density of high-quality trajectories in the candidate solution space, and TAG subsequently identifies and selects the optimal answer from this reshaped candidate space, thereby fully translating the improvement in solution space quality into final prediction performance. In other words, SAG addresses the issue of “whether enough high-quality candidates can be generated,” while TAG addresses the issue of “whether high-quality answers can be reliably selected from the candidates”; together, they constitute a complete value-guided process ranging from space reshaping to final decision-making.

The lower half of the table further validates the effectiveness of shared initialization and task-specific design. Removing SWA’s shared initialization resulted in a 1.3 percentage-point decline in overall performance, indicating that although the six types of spatial question-answering tasks have different answer formats and reasoning objectives, they share fundamental spatial knowledge across object recognition, cross-view information aggregation, spatial relationship understanding, and scale modeling. By aggregating the shared parameters from the optimal states of each task, SWA preserves common spatial representations across tasks and provides a more robust general initialization. In contrast, replacing the multi-head TAG with a single-head structure results in a 4.5 percentage-point drop in performance, indicating that different problem types exhibit distinct task-specific preferences for evaluating termination trajectories. A single-head TAG must evaluate candidate trajectories with different answer formats, error patterns, and reasoning bases using a unified standard, which can easily lead to cross-task interference; in contrast, a multi-head TAG learns task-specific end-state value functions for each problem type based on shared spatial representations, achieving further task-oriented optimization. Therefore, SWA extracts transferable general spatial knowledge from six categories of spatial question-answering tasks. At the same time, multi-head TAG further learns task-specific value judgment criteria on this basis, forming a hierarchical modeling mechanism that progresses from shared spatial representations to specialized end-state evaluation.

\begin{figure*}[t]
    \centering
    \includegraphics[width=0.8\linewidth]{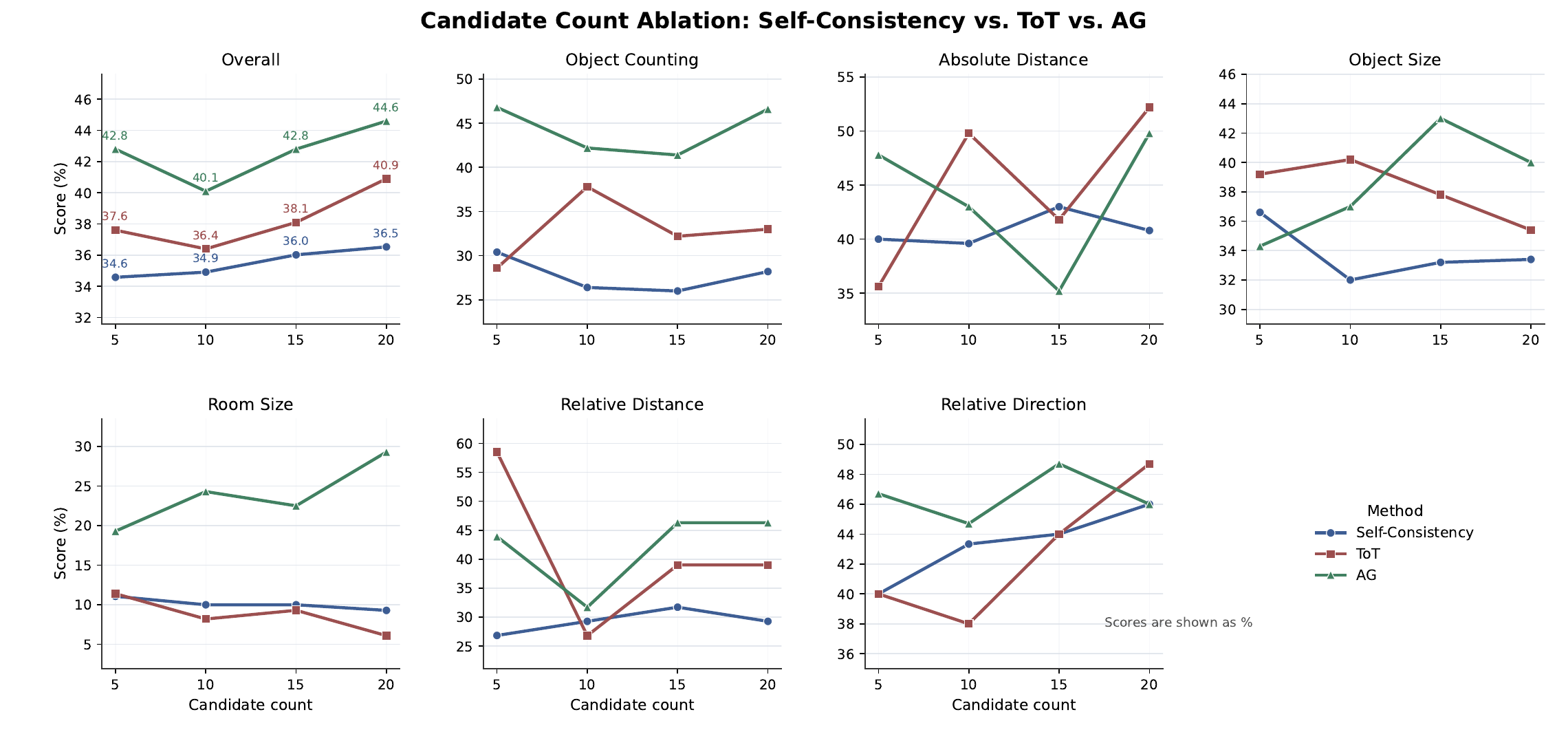}
    \caption{Effect of candidate budget on Self-Consistency, Tree-of-Thought (ToT), and the proposed Advantage-Guided Gate (AG) on ScanNet. Overall and task-specific performance is reported using 5, 10, 15, and 20 candidates. The horizontal axis denotes the number of candidates, and the vertical axis denotes the score (\%). AG achieves the best overall performance across all budgets, indicating more effective use of additional candidates.}
    \label{fig: candidate count ablation}
    \vspace{-0.5cm}
\end{figure*}

\subsection{Effect of Candidate Budget}
Figure. ~\ref{fig: candidate count ablation} compares the overall performance of Self-Consistency, ToT, and Advantage-Guided Gate (AG) under different numbers of candidates. All three methods achieve inference gains by expanding the candidate search space; therefore, the number of candidates directly determines the search budget during inference. The results show that expanding the search space only consistently translates into performance improvements when combined with an effective candidate utilization mechanism. As the number of candidates increased from 5 to 20, Self-Consistency’s overall score rose only from 34.6 to 36.5, indicating that while repeated sampling expanded the coverage of possible answers, simple aggregation struggled to utilize the additional candidates fully; ToT’s overall score fluctuated between 36.4 and 40.9, indicating that branch-and-bound search based on heuristic node evaluation is sensitive to the number of candidates, and expanding the search space does not necessarily yield stable gains. In contrast, AG achieved higher performance across all candidate budgets, reaching 42.8 with only 5 candidates—already surpassing the results of Self-Consistency and ToT with 20 candidates. When the number of candidates increased to 20, AG’s overall score further improved to 44.6, exceeding SC@20 and ToT@20 by 8.1 and 3.7 percentage points, respectively. This demonstrates that AG not only expands the coverage of potential high-quality inference paths by increasing the number of candidates but also effectively identifies and retains candidates where they are most likely to lead to the correct answer through a value-guided mechanism, thereby more fully converting the search budget into final performance. Taking both the search budget and the best overall performance within the current candidate set into account, this paper sets the default number of candidates to 20.

\begin{figure*}[t]
    \centering
    \includegraphics[width=0.7\linewidth]{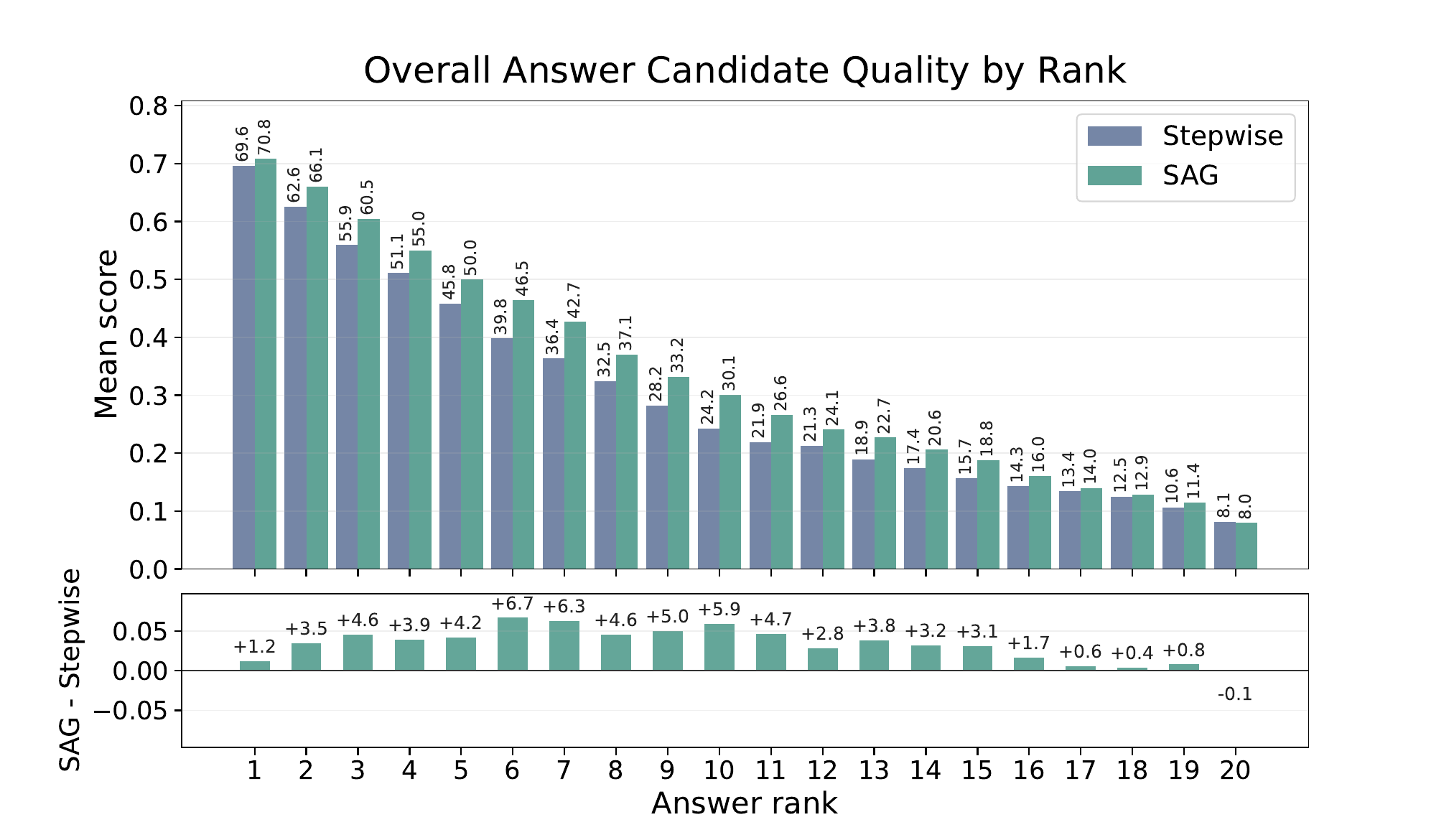}
    \caption{Comparison of candidate solution-space quality between Stepwise reasoning and Stepwise reasoning with SAG on ScanNet. For each question, 20 complete reasoning trajectories and their answers are scored against the ground truth and ranked in descending order, where Rank~1 denotes the highest-quality candidate. Scores at the same rank are then averaged over the test set. The upper panel reports the mean candidate quality at each rank, while the lower panel shows the absolute difference between SAG and Stepwise. SAG improves candidate quality across nearly all ranks, indicating a systematic reshaping of the candidate solution space rather than an isolated improvement of the best candidate. All scores are reported as percentages.}
    \label{fig: Stepwise_vs_SAG_distribution_visualization}
    \vspace{-0.5cm}
\end{figure*}

\subsection{Candidate Solution Space Reshaping by SAG}
Figure ~\ref{fig: Stepwise_vs_SAG_distribution_visualization} compares the quality of the candidate solution space between Stepwise reasoning and the approach incorporating SAG, to examine the impact of step-level gating on the final candidate distribution in isolation. Specifically, under the same reasoning budget and decoding configuration, both methods generate 20 complete reasoning trajectories and their answers for each problem, and score each candidate based on the ground truth. Next, for each question, the 20 candidates are sorted by their true scores from highest to lowest, where Rank 1 represents the highest-quality solution in the current set of candidates and Rank 20 represents the lowest-quality solution; finally, the scores of candidates at the same rank position in the test set are averaged. The upper half of the figure shows the average candidate quality at each rank position for the two methods, while the lower half shows the rank-wise gain of SAG relative to Stepwise. Since the ranking is based solely on the direct scores of the answers—without involving TAG prediction or selection—this experiment directly reflects SAG’s role in reshaping the candidate solution space itself.

The experimental results show that SAG achieved performance improvements across 19 out of 20 ranking positions, indicating that its improvements are not concentrated on a small number of candidates but rather have a broad impact across the entire candidate distribution. Where candidates ranked 2nd through 15th all saw steady improvements, the gains were particularly significant for those ranked 6th through 10th, reaching 4.6–6.7 percentage points. Furthermore, the average quality of the top 5, top 10, and top 15 candidates improved by 3.5, 4.6, and 4.2 percentage points, respectively, and the average score of all 20 candidates rose from 30.0 to 33.4. This distribution pattern indicates that SAG’s effect is not limited to improving a single optimal candidate but also systematically enhances the medium- to high-quality regions of the candidate solution space. The top-ranked candidate represents the highest-quality solution that the base model can generate under the current search budget, and its performance is already close to the upper bound of the candidate space’s capability; on this basis, SAG still raises its average quality from 69.6 to 70.8, indicating that gating can further improve the upper limit of the candidate space’s quality. More notably, candidates ranked 2nd through 15th all saw consistent improvements, with the gains most pronounced in the mid-range. This indicates that SAG, by suppressing the continued expansion of low-value prefixes, directs more generated trajectories into the medium-to-high-quality range, thereby expanding the set of high-quality candidates and increasing their relative proportion within the entire solution space. This result is consistent with the theoretical analysis in the appendix~\ref{appendix:open-ended_reasoning_space_reshaping}: by progressively restricting low-value branches, SAG contracts and reshapes the open-ended inference space, increasing the relative density of high-quality inference trajectories and providing a more favorable solution space foundation for TAG to reliably identify high-quality answers from the candidate set in subsequent stages.

\subsection{Effect of the Top-\(\rho\) Labeling Ratio}
Figure ~\ref{fig: threshold ablation} analyzes the impact of the Top-$\rho$ labeling threshold on the gated training data. Specifically, samples within the Top-$\rho$ percentile of scores among candidates in the same layer are labeled as “accept,” while the remaining samples are labeled as “reject.” Overall performance exhibits a non-monotonic variation with respect to Top-$\rho$: when $\rho$ increases from 15\% to 25\%, the overall score rises from 43.7 to 44.6; when $\rho$ is further increased to 35\%, performance drops to 43.1. This indicates that the Top-$\rho$ threshold requires an appropriate trade-off between the selectivity of high-value samples and the coverage of positive supervision. A smaller $\rho$ value marks only a small number of high-scoring candidates as `accept,` which, while ensuring high-quality positive samples, may also misclassify candidates with reasonable inference potential as “reject,” causing the model to learn an overly conservative acceptance boundary. Conversely, a larger $\rho$ expands the coverage of positive samples but also includes more candidates of relatively limited value in the “accept” set, narrowing the quality gap between positive and negative samples and increasing supervisory noise. In contrast, $\rho=25\%$ strikes a more reasonable balance between candidate quality and supervision diversity, thereby achieving the best overall performance. Although different tasks vary in their sensitivity to $\rho$, considering both overall performance and cross-task stability, this paper adopts $\rho=25\%$ as the default setting.

\begin{figure}[t]
    \centering
    \includegraphics[width=\linewidth]{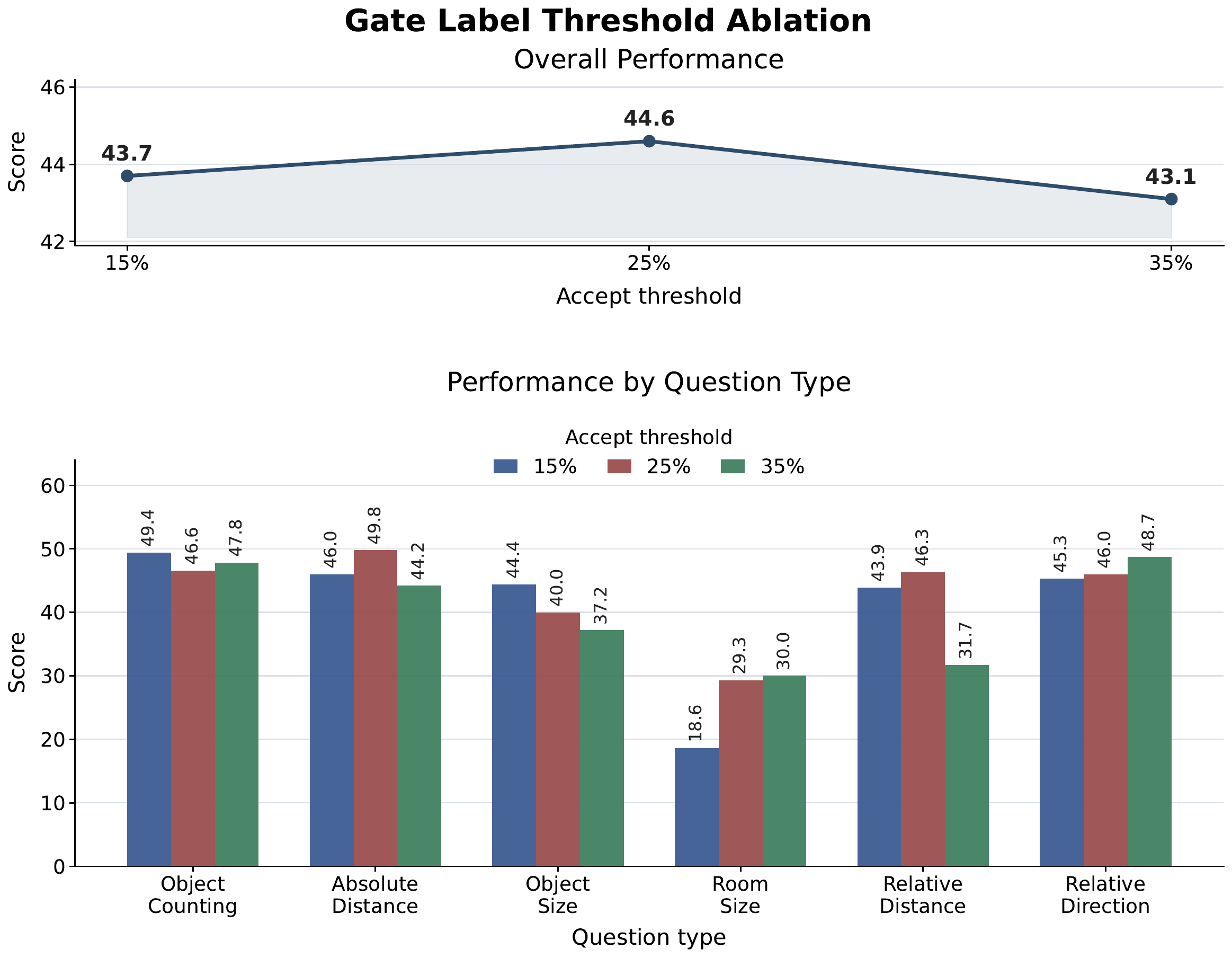}
    \caption{Effect of the Top-\(\rho\) labeling ratio used to construct SAG training labels on ScanNet. At each tree level, candidates ranked within the top \(\rho\) are labeled as accept, while the remaining candidates are labeled as reject. The upper panel reports overall performance under different labeling ratios, and the lower panel shows the corresponding task-specific results. The best performance is achieved at \(\rho=25\%\), indicating a better balance between positive-sample quality and supervision coverage. Scores are reported in \%.}
    \label{fig: threshold ablation}
    \vspace{-0.5cm}
\end{figure}

\section{Conclusion}

This work presented Advantage-Guided Gate (AG), a plug-and-play framework for improving open-ended spatial reasoning in MLLMs without modifying the evaluated model parameters. By constructing Monte Carlo reasoning trees, we transform terminal answer feedback into hierarchical supervision for intermediate prefixes and complete trajectories. The resulting Step-Advantage Gate filters low-value reasoning prefixes, while the Trajectory-Advantage Gate selects high-quality terminal trajectories. Extensive experiments on ScanNet, ScanNet++, Matterport3D, and HM3D demonstrate consistent improvements across different base models and visual observation settings. Theoretical analysis and ablation studies further show that SAG and TAG play complementary roles in reshaping the candidate solution space and improving final-answer selection. These results indicate that reliable spatial reasoning depends not only on generating diverse reasoning paths, but also on effectively identifying and retaining high-value trajectories. Future work will explore adaptive reasoning budgets and tighter integration between reasoning control and spatial perception.



\bibliographystyle{IEEEtran}
\bibliography{IEEEtrans}

\biogap

\begin{IEEEbiographynophoto}{Ling Lin} is currently pursuing the Ph.D. degree with the School of Artificial Intelligence and Data Science, University of Science and Technology of China (USTC). His research interests include multimodal large language models, out-of-distribution research, multimodal reasoning, and 3D scene understanding. He has published papers in several leading journals and conferences such as IEEE TIP, TNNLS, TCSVT, ICASSP, and ICME.
\end{IEEEbiographynophoto}

\biogap

\begin{IEEEbiographynophoto}{Yang Bai}
Yang Bai received the M.S. degree in computer science from Newcastle University, U.K., in 2018, and the Ph.D. degree in computer science from Durham University, U.K., in 2023. He is currently a Scientist at the Institute of High Performance Computing (IHPC), A*STAR, Singapore. His research interests include multimodal foundation models, vision--language learning, and medical artificial intelligence, with a focus on medical image understanding and clinical multimodal reasoning.
\end{IEEEbiographynophoto}

\biogap

\begin{IEEEbiographynophoto}{Congcong Zhu\,(Member, IEEE)} received the Ph.D. degree from the School of Computer Engineering and Science, Shanghai University, Shanghai, China, in 2022. He is currently an Associate Researcher at the Suzhou Institute for Advanced Research, University of Science and Technology of China, Suzhou, China. His research interests include embodied artificial intelligence, multimodal content generation, and multimodal understanding. He has published papers in several journals, including IEEE Transactions on Image Processing, IEEE Transactions on Multimedia, IEEE Transactions on Circuits and Systems for Video Technology, Pattern Recognition, and IEEE Transactions on Instrumentation and Measurement, as well as in leading conferences, including ICML, CVPR, ICCV, AAAI, and ACM MM. He was nominated for the Shanghai Computer Society’s Outstanding Doctoral Dissertation Award, received the Outstanding Graduate Student Award of Shanghai, and was recognized as an Outstanding Postdoctoral Fellow in Jiangsu Province, China.
\end{IEEEbiographynophoto}

\biogap

\begin{IEEEbiographynophoto}{Jiangming Shi}
is currently an Assistant Professor with the School of Computer Science and Technology, East China Normal University, China. He received the Ph.D. degree from Xiamen University, China, in 2026. His research interests include spatial intelligence foundation models, robust multimodal learning, and all-weather visual perception. He has published nearly ten first-authored papers in leading international journals and conferences, including NeurIPS, ICCV, ECCV, and AAAI. He was a recipient of the Best Paper Award at CCHI.
\end{IEEEbiographynophoto}

\biogap

\begin{IEEEbiographynophoto}{Meng Wang}
Meng Wang received the Ph.D. degree in signal and information processing from Soochow University, China, in 2021. He is currently an Assistant Professor at the Centre for Innovation \& Precision Eye Health and the Department of Ophthalmology, Yong Loo Lin School of Medicine, National University of Singapore. His research interests include artificial intelligence, medical image analysis, multimodal foundation models, and clinical AI, with a focus on developing trustworthy and clinically deployable AI methods for precision healthcare.
\end{IEEEbiographynophoto}

\biogap

\begin{IEEEbiographynophoto}{Yang Long\,(Senior Member, IEEE)}
Yang Long is an Associate Professor in the Department of Computer Science, Durham University, U.K. He is also an MRC Innovation Fellow aiming to design scalable AI solutions for large-scale healthcare applications. His research background is in the interdisciplinary field of computer vision and machine learning. He is passionate about unveiling the black-box of AI and transferring the knowledge to seek scalable, interactable, interpretable, and sustainable solutions for other disciplinary researches, e.g., physical activity, mental health, design, education, security, and geoengineering. He has authored/coauthored 30+ top-tier papers in refereed journals/conferences such as IEEE TPAMI, TIP, CVPR, AAAI, and ACM MM, and holds a patent and a Chinese National Grant.
\end{IEEEbiographynophoto}

\biogap

\begin{IEEEbiographynophoto}{Jingrun Chen} received the B.S. degree from Nanjing University, China, in 2005, and the PhD degree from the Chinese Academy of Sciences, China, in 2010. From 2010 to 2015, he served as a Visiting Assistant Professor at the University of California, Santa Barbara, and from 2015 to 2021, he was a Distinguished Professor at Soochow University. He is currently a Professor at the Suzhou Institute for Advanced Research/School of Mathematical Sciences at the University of Science and Technology of China. His main work has been published in academic journals on applied and computational mathematics such as the SIAM series, Mathematics of Computation, and Journal of Computational Physics, as well as in interdisciplinary academic journals like Nature Communications and Materials Horizons.    
\end{IEEEbiographynophoto}

\biogap

\begin{IEEEbiographynophoto}{Ling Shao\,(Fellow, IEEE)}
Ling Shao is a Distinguished Professor with the UCAS-Terminus AI Lab, University of Chinese Academy of Sciences, Beijing, China. He was the founder of the Inception Institute of Artificial Intelligence (IIAI) and the Mohamed bin Zayed University of Artificial Intelligence (MBZUAI), Abu Dhabi, UAE. His research interests include generative AI, vision and language, and AI for healthcare. He is a fellow of the IEEE, the IAPR, the BCS and the IET.
\end{IEEEbiographynophoto}

\biogap

\begin{IEEEbiographynophoto}{Huazhu Fu\,(Senior Member, IEEE)}
Huazhu Fu is a Principal Scientist at the Institute of High Performance Computing (IHPC), A*STAR, Singapore. He earned his Ph.D. from Tianjin University in 2013. His research focuses on medical image analysis, AI for healthcare, and trustworthy AI. He is an Associate Editor for several distinguished journals, including IEEE Transactions on Medical Imaging, IEEE Transactions on Neural Networks and Learning Systems, and IEEE Journal of Biomedical and Health Informatics.
\end{IEEEbiographynophoto}

\clearpage

\appendices

\setcounter{section}{0}
\setcounter{subsection}{0}
\setcounter{subsubsection}{0}

\renewcommand{\thesection}{\Alph{section}}
\renewcommand{\thesubsection}{\thesection.\arabic{subsection}}
\renewcommand{\thesubsubsection}{\thesubsection.\arabic{subsubsection}}

\makeatletter
\def\@seccntformat#1{\csname the#1\endcsname\quad}
\makeatother

\renewcommand{\theequation}{A.\arabic{equation}}
\setcounter{equation}{0}

\section{Policy Improvement Perspective on Step-wise Reasoning}
\label{Sec:theory}

\subsection{Step-wise Reasoning as a Finite-Horizon Decision Process}
We model the step-by-step reasoning process of multimodal large language models (MLLMs) as a finite-horizon decision process.

In the $t$-th inference step, the state is defined as:

\begin{equation}
    h_t=(x, q, \tau_{1:t-1}),
\end{equation}

\noindent where $x$ denotes the visual input, $q$ denotes the problem, and $\tau_{1:t-1}=(r_1,\ldots,r_{t-1})$ denotes the accepted reasoning prefix generated before step $t$.

The reasoning policy for the original MLLM is as follows:

\begin{equation}
    \pi_0 (r_t \mid h_t ).
\end{equation}

\noindent where $r_t$ denotes the reasoning step generated at the current state $h_t$, \textit{i.e.}, the current reasoning action.

The complete reasoning trajectory is recorded as follows:

\[
    \tau_{1:K}=(r_1,\dots,r_K).
\]

The corresponding terminal trajectory return is defined as:

\[
    R(\tau_{1:K}) \in [0,1].
\]

Furthermore, we define the true action-value function under the initial policy $\pi_0$:

\begin{equation}
    Q_t^{\pi_0}(h_t, r_t)=\mathbb{E}_{\tau\sim\pi_0}[R(\tau)\mid h_t, r_t],
\end{equation}

\noindent which represents the expected trajectory reward that can ultimately be obtained when, starting from the current state $h_t$, the inference step $r_t$ is selected, and subsequent inference continues according to the original policy $\pi_0$.

Accordingly, the state value function is defined as:

\begin{equation}
    V_t^{\pi_0}(h_t)=\mathbb{E}_{r_t\sim \pi_0}[Q_t^{\pi_0}(h_t, r_t)].
\end{equation}

Ideally, if the true action-value function $Q_t^{\pi_0}(h_t, r_t)$ is observable, an ideal gated policy can be defined as follows:

\begin{equation}
    \pi_T^*(r_t\mid h_t)=\frac{\pi_0(r_t\mid h_t)\mathds{1}[Q_t^{\pi_0} (h_t, r_t)\geq T]}{Z_T(h_t)},
\end{equation}

\noindent where the normalization term is defined as:

\begin{equation}
    Z_T(h_t)= \Pr_{r_t\sim\pi_0}[Q_t^{\pi_0} (h_t, r_t)\geq T],
\end{equation}

\noindent when $Z_T(h_t) > 0$, the aforementioned gating policy constitutes a valid probability distribution.

\subsection{Ideal Value-Gated Policy Improvement}
\label{app:proof_gating_monotonicity}

\begin{proposition}
\label{prop:gating_amplification}
\textbf{Ideal Gating Amplifies the Single-Step Expected Value.}

For any history state $h_t \in \mathcal{H}$ satisfying the condition $Z_T(h_t) > 0$, the optimal gating policy $\pi_T^*$ guarantees a non-negative enhancement over the baseline inference policy in terms of the single-step expected value, i.e.,

\begin{equation}
\mathbb{E}_{r_t \sim \pi_T^*(\cdot \mid h_t)} \left[ Q_t^{\pi_0}(h_t, r_t) \right] \ge \mathbb{E}_{r_t \sim \pi_0(\cdot \mid h_t)} \left[ Q_t^{\pi_0}(h_t, r_t) \right]. \label{eq:single_step_bound}
\end{equation}

By the definition of the state-value function, Eq.~\ref{eq:single_step_bound} can be equivalently formulated as:

\begin{equation}
\mathbb{E}_{r_t \sim \pi_T^*(\cdot \mid h_t)} \left[ Q_t^{\pi_0}(h_t, r_t) \right] \ge V_t^{\pi_0}(h_t).
\end{equation}
\end{proposition}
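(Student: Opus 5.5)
The plan is to treat the statement as a fact about conditioning a bounded random variable on an upper level set. Fix $h_t$ with $Z_T(h_t)>0$ and set $S = Q_t^{\pi_0}(h_t, r_t)$ with $r_t \sim \pi_0(\cdot\mid h_t)$. Since $R\in[0,1]$, $S$ is bounded and takes values in $[0,1]$, so all expectations below exist. Let $A=\{S\ge T\}$, so that $\Pr(A)=Z_T(h_t)>0$.

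\textbf{Step 1.} By the definition of $\pi_T^*$, for any bounded $f$ we have
\[
\mathbb{E}_{r_t\sim\pi_T^*(\cdot\mid h_t)}[f(r_t)]=\frac{\mathbb{E}_{\pi_0}[f(r_t)\mathds{1}_A]}{\Pr(A)}=\mathbb{E}_{\pi_0}[f(r_t)\mid A].
\]
In the discrete case this is immediate from summing the reweighted probabilities. In the general case it follows from the density of $\pi_T^*$ with respect to $\pi_0$, which is $\mathds{1}_A/Z_T$. Applying this with $f=Q_t^{\pi_0}(h_t,\cdot)$ turns the left-hand side of Eq.~\eqref{eq:single_step_bound} into $\mathbb{E}[S\mid A]$. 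The right-hand side is $\mathbb{E}[S]=V_t^{\pi_0}(h_t)$ by definition, which also gives the equivalent reformulation stated in the proposition.

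\textbf{Step 2.} The goal is to show $\mathbb{E}[S\mid A]\ge\mathbb{E}[S]$, and I will do this without a division on the complement. Write
\[
\mathbb{E}[S]-\mathbb{E}[S\mid A]=\mathbb{E}\big[(S-\mathbb{E}[S\mid A])\mathds{1}_A\big]+\mathbb{E}\big[(S-\mathbb{E}[S\mid A])\mathds{1}_{A'}\big].
\]
The first term vanishes by the definition of $\mathbb{E}[S\mid A]$. On $A$ we have $S\ge T$, so $\mathbb{E}[S\mid A]\ge T$. On $A'$ we have $S<T\le \mathbb{E}[S\mid A]$, so the second integrand is strictly negative there. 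The second term is therefore $\le 0$, with equality iff $\Pr(A')=0$, and the inequality follows.

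An equivalent route uses the decomposition $\mathbb{E}[S]=\Pr(A)\mathbb{E}[S\mid A]+\Pr(A')\mathbb{E}[S\mid A']$ together with $\mathbb{E}[S\mid A']\le T\le\mathbb{E}[S\mid A]$. This shows that $\mathbb{E}[S]$ is a convex combination dominated by $\mathbb{E}[S\mid A]$. It is the cleanest route when $\Pr(A')>0$, and the case $\Pr(A')=0$ is trivial because then $\pi_T^*=\pi_0$ almost surely.

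The only real obstacle is bookkeeping rather than substance. The first part is identifying the gated expectation with the conditional expectation rigorously, which means measurability of $r_t\mapsto Q_t^{\pi_0}(h_t,r_t)$ and the well-definedness ensured by $Z_T>0$. The second part is handling the degenerate case $\Pr(A')=0$ separately, so that $\mathbb{E}[S\mid A']$ is never invoked when it is undefined. The additive argument in Step 2 avoids this second issue entirely, so I would present that argument as the main proof.
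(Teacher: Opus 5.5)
Your proof is correct and follows essentially the paper's argument: you identify the gated expectation with $\mathbb{E}[S\mid A]$ for $A=\{S\ge T\}$, split $\mathbb{E}[S]$ over $A$ and $A'$, and use that $S<T\le\mathbb{E}[S\mid A]$ on the rejected set. Your centered additive form is a small refinement over the paper's total-expectation step: it avoids invoking $\mathbb{E}[S\mid A']$ when $\Pr(A')=0$, a case the paper tacitly ignores, and it gives the sharp strictness criterion $\Pr(A')>0$.
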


\begin{proof}
Let the random variable be

\[
    S=Q_t^{\pi_0}(h_t, r_t),\quad r_t\sim\pi_0.
\]

The ideal gate policy corresponds to the accept event:

\[
    A={S \geq T}.
\]

Therefore, the expected value after gating can be written as the conditional expected value:

\[
    \mathbb{E}[S\mid A].
\]

By the law of total expectation, we have:

\[
    \mathbb{E}[S] = \Pr (A) \mathbb{E}[S \mid A] + \Pr(A')\mathbb{E}[S \mid A'].
\]

Since all samples in event $A$ satisfy $S \geq T$, and all samples in event $A'$ satisfy $S < T$, we have:

\[
    \mathbb{E}[S \mid A] \geq \mathbb{E}[S \mid A'].
\]

This further yields:

\[
    \mathbb{E}[S \mid A] \geq \mathbb{E}[S].
\]

Substituting back into the original definition, we obtain:

\[
    \mathbb{E}_{r_t \sim \pi_T^*(\cdot \mid h_t)} \left[ Q_t^{\pi_0}(h_t, r_t) \right] \ge \mathbb{E}_{r_t \sim \pi_0(\cdot \mid h_t)} \left[ Q_t^{\pi_0}(h_t, r_t) \right].
\]

The above inequality holds strictly when there exists a strictly lower-value action in the set of rejected actions.
\end{proof}

\begin{proposition}
\label{prop:Gating_Enhances_Full-Trajectory}
\textbf{Ideal Gating Enhances Full-Trajectory Cumulative Return.}

For any state $h_t$, the gating policy satisfies:

\begin{equation}
    \mathbb{E}_{r_t\sim \pi_T^*(\cdot\mid h_t)}
\left[Q_t^{\pi_0}(h_t,r_t)\right]
\geq V_t^{\pi_0}(h_t).    
\end{equation}

\noindent then, the expected return of the complete trajectory of the gated reasoning policy will not be lower than that of the original reasoning policy, that is:

\begin{equation}
    J(\pi^*_T)=\mathbb{E}_{\tau\sim\pi^*_T}[R(\tau)] \geq \mathbb{E}_{\tau\sim\pi_0}[R(\tau)]=J(\pi_0).
\end{equation}
\end{proposition}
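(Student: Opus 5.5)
The plan is a backward induction over the horizon that compares the value function of $\pi_T^*$ with that of $\pi_0$ state by state. The only local ingredient is Proposition~\ref{prop:gating_amplification}, which is exactly the hypothesis of the present statement. First I will fix the conventions.
\begin{itemize}
\item \textbf{Deterministic transitions.} Transitions are deterministic: $h_{t+1}=(h_t,r_t)$, i.e.\ the chosen step is appended to the prefix.
\item \textbf{Early termination.} A trajectory that emits a termination signal before step $K$ is treated as entering an absorbing state whose value is its terminal return $R(\tau)$.
\item \textbf{Zero intermediate rewards.} There are no intermediate rewards, so all return comes from $R(\tau_{1:K})\in[0,1]$.
\item \textbf{Empty acceptance sets.} The gated policy is only defined where $Z_T(h_t)>0$. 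At states with $Z_T(h_t)=0$ I will adopt the fallback $\pi_T^*(\cdot\mid h_t)=\pi_0(\cdot\mid h_t)$. At such states the single-step inequality holds trivially with equality, so the hypothesis holds at every state.
\end{itemize}

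Next I will write both value functions through their Bellman recursions.
\begin{itemize}
\item At the terminal boundary, $V_{K+1}^{\pi}(h_{K+1})=R(\tau_{1:K})$ for any policy $\pi$.
\item For $t\le K$, $V_t^{\pi}(h_t)=\mathbb{E}_{r_t\sim\pi(\cdot\mid h_t)}\big[V_{t+1}^{\pi}((h_t,r_t))\big]$.
\item The tower property applied to the definition of $Q_t^{\pi_0}$ gives the identity $Q_t^{\pi_0}(h_t,r_t)=V_{t+1}^{\pi_0}((h_t,r_t))$, because the continuation after $r_t$ follows $\pi_0$.
\item Since $J(\pi)=V_1^{\pi}(h_1)$ for the common initial state $h_1=(x,q,\emptyset)$, it suffices to show $V_t^{\pi_T^*}(h_t)\ge V_t^{\pi_0}(h_t)$ for all $t$ and all $h_t$.
\end{itemize}

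The induction runs downward from $t=K+1$.
\begin{itemize}
\item \textbf{Base case.} At $t=K+1$ the two value functions coincide, since both equal $R$.
\item \textbf{Inductive step.} Assume $V_{t+1}^{\pi_T^*}\ge V_{t+1}^{\pi_0}$ pointwise on all states. Then
\[
V_t^{\pi_T^*}(h_t)=\mathbb{E}_{r_t\sim\pi_T^*}\big[V_{t+1}^{\pi_T^*}((h_t,r_t))\big]\ge \mathbb{E}_{r_t\sim\pi_T^*}\big[V_{t+1}^{\pi_0}((h_t,r_t))\big]=\mathbb{E}_{r_t\sim\pi_T^*}\big[Q_t^{\pi_0}(h_t,r_t)\big]\ge V_t^{\pi_0}(h_t).
\]
\item \textbf{Justification of each step.} The first inequality uses the inductive hypothesis together with the monotonicity of expectation. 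The equality uses the identity $Q_t^{\pi_0}(h_t,r_t)=V_{t+1}^{\pi_0}((h_t,r_t))$. The final inequality is the assumed single-step bound (Proposition~\ref{prop:gating_amplification}).
\item \textbf{Conclusion.} Evaluating at $t=1$ gives $J(\pi_T^*)\ge J(\pi_0)$.
\end{itemize}

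The main obstacle is not the induction but a subtlety it hides. The gate uses $Q^{\pi_0}$, which is computed under $\pi_0$-continuations, while the trajectory actually continues under $\pi_T^*$. A naive argument might try to claim $Q^{\pi_T^*}\ge Q^{\pi_0}$ directly and get stuck. The fix is to use the inductive hypothesis for the future and the $\pi_0$-based single-step gain for the present step, combined as above. For this to work the inductive hypothesis must hold at every state $(h_t,r_t)$ in the support of $\pi_T^*$, which in turn lies in the support of $\pi_0$. I will therefore state the induction over all histories, not just those reachable under $\pi_0$. This is also why the $Z_T=0$ fallback is needed.

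Measurability and integrability pose no difficulty, because $R$ is bounded in $[0,1]$ and the horizon is finite. Variable-length trajectories are handled by the absorbing-state convention.
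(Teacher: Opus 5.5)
Your proposal is correct and is essentially the paper's proof: the same Bellman recursion with zero intermediate rewards, the identity $\mathbb{E}[V_{t+1}^{\pi_0}(h_{t+1})\mid h_t,r_t]=Q_t^{\pi_0}(h_t,r_t)$, and the same backward-induction chain ending in the single-step bound. The differences are cosmetic. You anchor the induction at $t=K+1$, where both value functions equal $R$, rather than at $t=K$ via the gating hypothesis, and you make explicit the prefix-append transitions, the early-termination convention, and the $Z_T=0$ fallback that the paper leaves implicit.
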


\begin{proof}
The finite-horizon residual value function is defined as:

\[
    V_t^\pi(h_t)=\mathbb{E}_{\tau\sim\pi}[R(\tau)\mid h_t].
\]

For any policy $\pi$, according to the Bellman decomposition, we have:

\begin{equation}
    V_t^\pi(h_t)=\mathbb{E}_{r_t\sim\pi}[R(h_t, r_t)+\mathbb{E}[V_{t+1}^\pi(h_{t+1})\mid h_t, r_t]].
\end{equation}

In the inference-generation scenario of this paper, intermediate reasoning steps do not yield immediate rewards; instead, the final reward is determined solely by the correctness of the answer upon termination of the complete reasoning trajectory. Consequently, for all non-terminal steps:

\[
    R(h_t, r_t) = 0, \quad t<K.
\]

Thus, the Bellman decomposition can be simplified to:

\begin{equation}
    V_t^\pi(h_t)=\mathbb{E}_{r_t\sim\pi}[\mathbb{E}[V_{t+1}^\pi(h_{t+1})\mid h_t, r_t]].
\end{equation}

We will now prove this using reverse induction.

At the termination step $K$, we have:

\[
    V_K^{\pi^*_T}(h_K)=\mathbb{E}_{r_K \sim\pi^*_T}[Q_K^{\pi_0}(h_K, r_K)].
\]

According to the gating hypothesis, the following holds:

\[
     V_K^{\pi^*_T}(h_K) \geq V_K^{\pi_0}(h_K).
\]

Suppose that for any step $t+1$, for any reachable state $h_{t+1}$, the following holds:

\[
    V_{t+1}^{\pi^*_T}(h_{t+1}) \geq V_{t+1}^{\pi_0}(h_{t+1}).
\]

Then:

\begin{equation}
    \begin{aligned}
    V_t^{\pi^*_T}(h_t) &= \mathbb{E}_{r_t \sim \pi^*_T} \left[ \mathbb{E} \left[ V_{t+1}^{\pi^*_T}(h_{t+1}) \;\middle|\; h_t, r_t \right] \right] \\
    &\ge \mathbb{E}_{r_t \sim \pi^*_T} \left[ \mathbb{E} \left[ V_{t+1}^{\pi_0}(h_{t+1}) \;\middle|\; h_t, r_t \right] \right] \\
    &= \mathbb{E}_{r_t \sim \pi^*_T} \left[ Q_t^{\pi_0}(h_t, r_t) \right] \\
    &\ge V_t^{\pi_0}(h_t).
    \end{aligned}
\end{equation}

Therefore:

\[
    V_t^{\pi^*_T}(h_t) \geq V_t^{\pi_0}(h_t).
\]

\noindent for all time steps.

Finally, from:

\[
    J(\pi) = V_1^\pi (h_1),
\]

\noindent we obtain:

\[
    J(\pi_T^*) \ge J(\pi_0).
\]

Therefore, ideal value gating can be viewed as a policy refinement process based on action-value estimates. It enhances the expected reward of the overall reasoning trajectory by filtering out low-value intermediate reasoning steps.
\end{proof}

\subsection{Reasoning Tree as Monte Carlo Policy Evaluation}
\label{Sec:Monte_Carlo_Policy_Evaluation}
Since the true action-value function $Q_t^{\pi_0}(h_t, r_t)$ cannot be directly observed, we approximate it using Monte Carlo rollouts on a reasoning tree.

Let the partial reasoning trajectory from step $1$ to step $t$ be denoted as:
\[
    \tau _{1:t} = (r_1, \dots, r_t),
\]

\noindent which corresponds to a state-action pair $(h_t, r_t)$.

The definition of true action value is:

\[
    Q_t^{\pi_0}(h_t, r_t) = \mathbb{E}_{\tau\sim\pi_0}[R(\tau)\mid h_t, r_t].
\]

Since the above expectations cannot be computed directly, we start from the current partial trajectory $\tau_{1:t}$ and sample multiple subsequent reasoning rollouts:

\[
    \tau^{(1)}, \tau^{(2)}, \dots, \tau^{(m)},
\]

\noindent where each rollout corresponds to a complete terminating reasoning trajectory and yields a termination return:

\[
    R(\tau^{(j)}).
\]

The Monte Carlo estimator is therefore defined as:

\begin{equation}
    \widehat{Q}_m(h_t, r_t) = \frac{1}{m}\sum_{j=1}^m R(\tau^{(j)}).
\end{equation}

By the law of large numbers, as $m \rightarrow \infty$, we have:

\[
    \widehat{Q}_m(h_t, r_t) \xrightarrow{a.s.} Q_t^{\pi_0}(h_t, r_t).
\]

Therefore, the supervisory signal constructed based on the reasoning tree rollout can essentially be viewed as a finite-sample Monte Carlo estimate of the latent action-value function.

Furthermore, by the Hoeffding inequality, for any $\epsilon > 0$, we have:

\begin{equation}
    \Pr (|\widehat{Q}_m(h_t, r_t)-Q_t^{\pi_0}(h_t, r_t)| \ge \epsilon) \leq 2 \exp (-2m\epsilon^2).
\end{equation}

Note:

\[
    \eta = 2\exp (-2m\epsilon^2),
\]

\noindent let $\epsilon$ to $\epsilon_\text{MC}$, from which we can solve for:



\begin{equation}
    \epsilon_{\text{MC}} = \sqrt{\frac{\log (2/\eta)}{2m}}.
    \label{Eq:epsilon_MC}
\end{equation}

Thus, with probability at least $1-\eta$, we have:

\begin{equation}
    |\widehat{Q}_m(h_t, r_t)-Q_t^{\pi_0}(h_t, r_t)| \leq \epsilon_{\text{MC}}.
\end{equation}

\subsection{Approximate Policy Improvement with Learned Gates}

In practical scenarios, the true action-value function $Q_t^{\pi_0} (h_t, r_t)$ cannot be obtained directly and can only be approximated via finite Monte Carlo rollouts; therefore, the ideal gating policy cannot be implemented.

Based on the Monte Carlo estimate $\widehat{Q}_m (h_t, r_t)$, we define the empirical gating operator:

\begin{equation}
    \widehat{g}_m(h_t, r_t) = \mathds{1}[\widehat{Q}_m(h_t, r_t)\geq T].
\end{equation}

We further introduce a learnable gating operator:

\[
    g_\phi(h_t, r_t) \approx \widehat{g}_m(h_t, r_t),
\]

\noindent where $\phi$ denotes the parameters of the gating model.

Accordingly, the resulting gated reasoning policy is:

\begin{equation}
    \pi_\phi(r_t \mid h_t) = \frac{\pi_0(r_t\mid h_t)g_\phi(h_t, r_t)}{Z_\phi(h_t)},
\end{equation}

\noindent where the normalization term is:

\[
    Z_\phi(h_t) = \mathbb{E}_{r_t\sim \pi_0}[g_\phi(h_t, r_t)].
\]

The discrepancy between the learned policy $\pi_\phi$ and the ideal gated policy $\pi_T^*$ originates from two sources:

\begin{itemize}
    \item Monte Carlo estimation error induced by finite rollout sampling;
    \item Approximation error of the learnable gating operator. 
\end{itemize}

From Section~\ref{Sec:Monte_Carlo_Policy_Evaluation}, with probability at least $1-\eta$, the Monte Carlo estimation error satisfies:

\begin{align*}
|\widehat{Q}_{m}(h_t, r_t) - Q_t^{\pi_0}(h_t, r_t)| &\leq \epsilon_{\text{MC}}, \\
\text{where} \quad \epsilon_{\text{MC}} &= \sqrt{\frac{\log (2/\eta)}{2m}}.
\end{align*}

Meanwhile, we define the gating approximation error as:

\begin{equation}
    \epsilon_{\text{gate}} = \Pr[g_\phi (h_t, r_t) \neq \widehat{g}_m(h_t, r_t)].
\end{equation}

Under bounded terminal returns and finite reasoning horizon $K$, the policy improvement of the learned gated policy can be lower bounded by:

\begin{equation}
    J(\pi_\phi)-J(\pi_0) \geq \Delta^*-\delta_1 K \epsilon_{\text{MC}}-\delta_2 K \epsilon_{\text{gate}},
    \label{Eq:pi_phi_p_0}
\end{equation}

\noindent where,

\[
    \Delta^*=J(\pi_T^*)-J(\pi_0),
\]

\noindent denotes the ideal policy improvement obtained by oracle gating.

Substituting Eq.~\ref{Eq:epsilon_MC} into Eq.~\ref{Eq:pi_phi_p_0}, we obtain:

\begin{equation}
    J(\pi_\phi)-J(\pi_0) \geq \Delta^*-\delta_1 K \sqrt{\frac{\log (2/\eta)}{2m}}-\delta_2 K \epsilon_{\text{gate}}.
    \label{Eq:oracle_error}
\end{equation}

Therefore, as the number of Monte Carlo rollouts increases and the gating approximation error decreases, the learned gated reasoning policy progressively approaches the ideal value-gated policy improvement.

\section{Open-ended Reasoning Space Reshaping}
\label{appendix:open-ended_reasoning_space_reshaping}
The preceding analysis has demonstrated, from the perspective of improving finite-horizon policies, that the expected trajectory return of the ideal value-gated policy is no less than that of the original reasoning policy. In this section, we further characterize the role of step-wise gating from the perspective of an open-ended reasoning space.

Specifically, by restricting the admissible actions at each reasoning state, step-wise gating contracts the support of the complete trajectory distribution and redistributes probability mass among the retained trajectories. Consequently, the original open-ended reasoning space is reshaped into a value-admissible reasoning subspace characterized by the action-value function.

\subsection{Open-ended reasoning trajectory space}
Let 

\[
    \mathcal{T}=\{\tau=(r_1, \cdots, r_{K_\tau}): 1\leq K_\tau \leq K\},
\]

\noindent denote the open-ended space of all terminating reasoning trajectories, where $K_\tau$ is the termination depth of trajectory $\tau$, and $K$ is the maximum reasoning horizon.

The trajectory distribution induced by an arbitrary reasoning policy $\pi$ on $\mathcal{T}$ is

\begin{equation}
    p_\pi(\tau) = \prod_{t=1}^{K_\tau}\pi(r_t\mid h_t),
\end{equation}

\noindent where $h_t=(x, q, r_{1:t-1})$ denotes the reasoning state before the generation of the $t$th reasoning step.

Specifically, the trajectory distribution induced by the initial reasoning policy $\pi_0$ is

\begin{equation}
    p_0(\tau) = \prod_{t=1}^{K_\tau}\pi_0(r_t\mid h_t).
\end{equation}

Since reasoning actions are generated autoregressively in natural language, the model can produce highly diverse subsequent content at each step, resulting in a trajectory space $\mathcal{T}$ that is both open-ended and vast. The original policy assigns non-zero probability mass to trajectories with different terminal returns, so its support typically encompasses both high-return and low-return reasoning trajectories.

Recall the action-value function defined earlier:

\begin{equation}
    Q_t^{\pi_0}(h_t, r_t)=\mathbb{E}_{\tau\sim\pi_0}[R(\tau)\mid h_t, r_t],
\end{equation}

Given a value threshold $T$, define the ideal gate function as follows:

\begin{equation}
    g_T^*(h_t, r_t) = \mathds{1}[Q_t^{\pi_0}(h_t, r_t)\geq T].
\end{equation}

The corresponding ideal value-gated policy is

\begin{equation}
    \pi_T^*(r_t\mid h_t) = \frac{\pi_0(r_t\mid h_t)g_T^*(h_t, r_t)}{Z_T(h_t)},
\end{equation}

\noindent where

\begin{equation}
    Z_T(h_t)=\mathbb{E}_{r_t\sim\pi_0(\cdot\mid h_t)}[g_T^*(h_t, r_t)].
\end{equation}

Equivalently

\begin{equation}
    Z_T(h_t)=\Pr_{r_t\sim \pi_0(\cdot\mid h_t)}(Q_t^{\pi_0}(h_t, r_t)\geq T).
\end{equation}

Therefore, $Z_T(h_t)$ represents the total probability mass assigned by the original policy to all admissible actions in state $h_t$. After dividing by $Z_T(h_t)$, the probabilities of the retained actions are renormalized to form a valid distribution.



We assume that the admissible-action probability mass is uniformly positive at every state reachable under the original policy, \textit{i.e.},
\[
Z_T(h_t)>0
\qquad
\text{for all }h_t\text{ such that }p_0(h_t)>0.
\]

We further define the Oracle value-admissible trajectory space as

\begin{equation}
    \mathcal{T}_T^* = \{\tau \in \mathcal{T}: Q_t^{\pi_0}(h_t, r_t) \geq T, \quad \forall t \leq K_\tau\}.
\end{equation}

This set consists of all complete trajectories that satisfy the value constraint at every reasoning step.

\begin{proposition}
\label{appendix: proposition3}
\textbf{Support contraction of the reasoning trajectory space.}

Ideal step-wise gating restricts the support of the trajectory distribution to the Oracle value-admissible trajectory space, as follows:

\begin{equation}
    \mathrm{Supp}(p_T^*)\subseteq \mathcal{T}^*_T \subseteq \mathcal{T},
\end{equation}

\noindent where

\begin{equation}
    p_T^*(\tau) = \prod_{t=1}^{K_\tau} \pi_T^*(r_t \mid h_t).
\end{equation}
\end{proposition}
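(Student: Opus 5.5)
The plan is to argue directly from the product form of $p_T^*$ and the definition of $\pi_T^*$. The second inclusion $\mathcal{T}_T^*\subseteq\mathcal{T}$ holds by definition, since $\mathcal{T}_T^*$ is carved out of $\mathcal{T}$ by an extra constraint. The substance is therefore the first inclusion. We show that any trajectory carrying positive mass under $p_T^*$ satisfies the value constraint at every step.

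First, we check that $p_T^*$ is well defined on every trajectory it can reach. We argue by induction on $t$. Suppose a prefix $r_{1:t-1}$ has positive probability under $\pi_T^*$. Each factor $\pi_T^*(r_s\mid h_s)$ with $s<t$ is then positive. Because $\pi_T^*(r_s\mid h_s)\le \pi_0(r_s\mid h_s)/Z_T(h_s)$, each factor $\pi_0(r_s\mid h_s)$ is also positive, so $p_0(h_t)>0$. The standing assumption then gives $Z_T(h_t)>0$, and the renormalized policy $\pi_T^*(\cdot\mid h_t)$ is a valid distribution at every state reachable under $\pi_T^*$. This is the same reasoning that makes Proposition~\ref{prop:gating_amplification} applicable.

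Second, we fix $\tau=(r_1,\dots,r_{K_\tau})$ with $p_T^*(\tau)>0$. Since $p_T^*(\tau)=\prod_{t=1}^{K_\tau}\pi_T^*(r_t\mid h_t)$, every factor must be strictly positive. By definition,
\[
\pi_T^*(r_t\mid h_t)=\frac{\pi_0(r_t\mid h_t)\,g_T^*(h_t,r_t)}{Z_T(h_t)},
\]
and the numerator is nonnegative. Positivity of this factor therefore forces $g_T^*(h_t,r_t)=1$, that is, $Q_t^{\pi_0}(h_t,r_t)\ge T$. It also forces $\pi_0(r_t\mid h_t)>0$, which incidentally gives $\mathrm{Supp}(p_T^*)\subseteq\mathrm{Supp}(p_0)$. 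Since this holds for every $t\le K_\tau$, we conclude $\tau\in\mathcal{T}_T^*$.

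We expect no serious obstacle. The only delicate point is the bookkeeping in the first step: we must confirm that the states $h_t$ appearing along a $p_T^*$-positive trajectory are reachable under $\pi_0$, so that the assumption $Z_T(h_t)>0$, stated only for $p_0$-reachable states, actually applies to them. The domination $\pi_T^*\le \pi_0/Z_T$ handles this. If one also wants a general discrete action space rather than a finite one, we would phrase the support in terms of positive probability, or in terms of densities with respect to a common base measure. The factorwise argument is unchanged in either case.
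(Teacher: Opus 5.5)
Your proof is correct and follows the same route as the paper: positivity of the product $p_T^*(\tau)$ forces every factor $\pi_T^*(r_t\mid h_t)$ to be positive, hence $g_T^*(h_t,r_t)=1$ and $Q_t^{\pi_0}(h_t,r_t)\ge T$ at each step, while $\mathcal{T}_T^*\subseteq\mathcal{T}$ holds by construction. Your inductive check, via $\pi_T^*\le\pi_0/Z_T$, that every state along a $p_T^*$-positive prefix is $\pi_0$-reachable (so the standing assumption $Z_T(h_t)>0$ applies there) is a well-definedness detail the paper leaves implicit.
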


\begin{proof}
    For any state-action pair $(h_t, r_t)$, if

\[
    Q_t^{\pi_0}(h_t, r_t) < T,
\]

\noindent according to the definition of Oracle Gate,

\[
    g_T^*(h_t, r_t) = 0,
\]

\noindent consequently,

\[
    \pi_T^*(r_t \mid h_t) = 0.
\]

Consider now any complete reasoning trajectory $\tau = (r_1, \cdots, r_{K_\tau})$ that satisfies $p_T^*(\tau) > 0$.


Since the probability of a trajectory factorizes into the product of step-wise conditional probabilities, $p_T^*(\tau)>0$ implies
\[
\pi_T^*(r_t\mid h_t)>0,
\qquad \forall t\le K_\tau.
\]

This means that for any $t \leq K_\tau$, we have

\[
    g_T^*(h_t, r_t) = 1,
\]

\noindent that is,

\[
    Q_t^{\pi_0}(h_t, r_t) \geq T.
\]

Based on the definition of $\mathcal{T}_T^*$, we have

\[
    \tau \in \mathcal{T}_T^*.
\]

Therefore

\[
    \mathrm{Supp}(p_T^*) \subseteq \mathcal{T}_T^*.
\]

On the other hand, $\mathcal{T}_T^*$ is a subset obtained by applying stepwise value constraints to the original trajectory space $\mathcal{T}$; therefore

\[
    \mathcal{T}_T^* \subseteq \mathcal{T}.
\]

In summary,

\begin{equation}
    \mathrm{Supp}(p_T^*) \subseteq \mathcal{T}_T^* \subseteq \mathcal{T}.
\end{equation}

\end{proof}

\subsection{Trajectory-distribution reshaping}

For any $\tau \in \mathcal{T}_T^*$, every action in the trajectory passes through the Oracle Gate; therefore

\[
    g_T^*(h_t, r_t) = 1, \quad \forall t \leq K_\tau.
\]

Consequently

\begin{equation}
    p_T^*(\tau) = \prod _ {t=1}^{K_\tau} \pi_T^*(r_t\mid h_t) = \prod _{t=1}^{K_\tau} \frac{\pi_0(r_t \mid h_t)}{Z_T(h_t)} = p_0(\tau) \prod _{t=1}^{K_\tau} \frac{1}{Z_T(h_t)}.
\end{equation}

For any $\tau \notin \mathcal{T}_T^*$, there is at least one step in the trajectory that does not satisfy the value constraint; therefore, $p_T^*(\tau) = 0$.

Therefore, the distribution of trajectories after gating can be uniformly expressed as

\begin{equation}
    p_T^*(\tau) = p_0(\tau) \prod_{t=1}^{K_\tau} \frac{\mathds{1}[Q_t^{\pi_0}(h_t, r_t)\geq T]}{Z_T(h_t)}.
\end{equation}

This formulation indicates that step-wise gating has two effects on the distribution of complete trajectories.

First, the generation probability of any trajectory containing a reasoning step with an action-value below the threshold is set to zero.

Second, the probabilities of the retained admissible trajectories are rescaled according to the normalization factors $Z_T(h_t)$ associated with the states encountered along each trajectory. Since different trajectories may traverse different reasoning states, their corresponding normalization factors may also differ. Consequently, the gated trajectory distribution is not obtained by globally conditioning the original distribution on $\mathcal{T}_T^*$; rather, it results from a state-dependent, step-wise reweighting process.

In other words, step-wise gating not only contracts the support of the trajectory distribution but also alters the relative probability mass assigned to the retained trajectories. This captures the distributional interpretation of reasoning-space reshaping.

\subsection{Expected-return preservation under space reshaping}

Combining Proposition~\ref{appendix: proposition3} with the finite-horizon policy improvement result established in Section~\ref{app:proof_gating_monotonicity}, we obtain

\begin{equation}
    J(\pi_T^*) \geq J(\pi_0),
\end{equation}

\noindent where

\[
    J(\pi) = \mathbb{E}_{\tau\sim p_\pi}[R(\tau)].
\]

Equivalently,

\begin{equation}
    \sum_{\tau\in\mathcal{T}}p_T^*(\tau)R(\tau) \geq \sum_{\tau\in\mathcal{T}}p_0(\tau)R(\tau).
\end{equation}

Therefore, the value-admissible trajectory space induced by Oracle gating is not an arbitrary subspace of the original trajectory space. Rather, it is explicitly characterized by the action-value function under the original policy, and the resulting trajectory distribution preserves or improves the expected return of complete reasoning trajectories. 
Therefore, the value-admissible trajectory space induced by Oracle gating is not an arbitrary subspace of the original trajectory space. Rather, it is characterized by the action-value function under the original policy, and its induced trajectory distribution preserves or improves the expected return of complete reasoning trajectories.

\subsection{Approximate reshaping under a learnable gate}

In practice, the true action-value function $Q_t^{\pi_0}(h_t, r_t)$ cannot be obtained directly. This paper uses reasoning-tree rollouts to construct a Monte Carlo estimate:

\[
    \hat{Q}_m(h_t, r_t) = \frac{1}{m} \sum_{j=1}^m R(\tau^{(j)}), \quad \tau^{(j)}\sim^\mathrm{i.i.d.} p_{\pi_0}(\cdot \mid h_t, r_t).
\]

Based on these Monte Carlo estimates, we train a learnable gating function $g_\phi(h_t, r_t)$ to approximate the Oracle gate induced by the true action-value function.

The corresponding learned gated policy is

\begin{equation}
    \pi_\phi(r_t \mid h_t) = \frac{\pi_0(r_t\mid h_t)g_\phi(h_t, r_t)}{Z_\phi(h_t)},
\end{equation}

\noindent where

\[
    Z_\phi(h_t) = \mathbb{E}_{r_t\sim \pi_0(\cdot \mid h_t)}[g_\phi(h_t, r_t)].
\]

Define the expected return gain for Oracle gating as

\[
    \Delta^* = J(\pi_T^*) - J(\pi_0).
\]

According to the preceding approximate policy improvement analysis, under the assumptions that the terminal return is bounded, the reasoning horizon is finite, and the gating normalization term is uniformly bounded away from zero, we have

\begin{equation}
    J(\pi_\phi) - J(\pi_0) \geq \Delta^* - C_1 K \epsilon_{\mathrm{MC}} - C_2 K \epsilon_\mathrm{gate},
\end{equation}

\noindent where $\epsilon_\mathrm{MC}$ represents the value estimation error generated by finite Monte Carlo rollouts, $\epsilon_\mathrm{gate}$ represents the approximation error of the learned gate relative to the Oracle Gate, and $C_1$ and $C_2$ are constants related to the normalization lower bound and the value-margin condition.

Therefore, when

\[
    \Delta^* > C_1 K \epsilon_\mathrm{MC} + C_2 K \epsilon_\mathrm{gate},
\]

\noindent we have

\[
    J(\pi_\phi) > J(\pi_0).
\]

Furthermore, when

\[
    \epsilon_\mathrm{MC} \rightarrow 0, \quad \epsilon_\mathrm{gate} \rightarrow 0,
\]

\noindent if the learned policy approximates the Oracle policy on the state-conditional distribution, then in a finite horizon, its trajectory distribution satisfies

\[
    \mathrm{TV} (p_\phi, p_T^*) \rightarrow 0.
\]

Due to

\[
    R(\tau) \in [0, 1],
\]

\noindent we have

\[
    |J(\pi_\phi) - J(\pi_T^*)| \leq \mathrm{TV} (p_\phi, p_T^*),
\]

\noindent consequently

\[
    J(\pi_\phi) \rightarrow J(\pi_T^*).
\]

As the Monte Carlo value estimates become more accurate and the learned gate more closely approximates the Oracle gate, the learned gating policy increasingly inherits the reasoning-space contraction, trajectory-distribution reshaping, and expected-return improvement properties of Oracle gating.

\textbf{Summary of Theoretical Findings.}

In summary, step-wise value gating reshapes the open-ended reasoning space at two levels: the support and the probability distribution.

At the support level,

\[
    \mathrm{Supp}(p_T^*) \subseteq \mathcal{T}_T^* \subseteq \mathcal{T},
\]

\noindent the gated trajectory distribution covers only those trajectories that satisfy the step-wise value constraints.

At the distribution level, 

\[
    p_T^*(\tau) = p_0(\tau) \prod_{t=1}^{K_\tau} \frac{\mathds{1}[Q_t^{\pi_0}(h_t, r_t) \geq T]}{Z_T(h_t)},
\]

\noindent illustrates how step-wise gating applies state-dependent, incremental reweighting to the trajectory distribution.

Based on the results of the previously mentioned policy improvements, the trajectory distribution after gating satisfies

\[
    J(\pi_T^*) \geq J(\pi_0).
\]

Therefore, step-wise gating does not merely filter individual reasoning actions in isolation. Instead, it reshapes the original open-ended trajectory distribution into a value-admissible distribution characterized by the action-value function and having an expected return no lower than that of the original policy.

\section{Computational Complexity and Inference Cost}

We further analyze the computational overhead of different inference enhancement methods. Let $N$ be the number of problems, $M$ the number of candidates generated per round, and $K$ the maximum number of inference steps. $C^{\mathrm{traj}}_{\mathrm{gen}}$ and $C^{\mathrm{step}}_{\mathrm{gen}}$ represent the costs of generating a complete inference trajectory and a single-step continuation, respectively. Furthermore, $C_{\mathrm{vote}}$ represents the cost of using an expert LLM/VLM to evaluate or vote on a candidate, and $C_{\mathrm{gate}}$ represents the cost of the gating model assigning a value score to a candidate. Typically, there are

\[
    C^{\mathrm{traj}}_{\mathrm{gen}} \approx C^{\mathrm{step}}_{\mathrm{gen}}, \quad C_{\mathrm{gate}} \ll C_{\mathrm{vote}},
\]

\noindent where the gated model performs only forward computations of the discriminator with a smaller parameter size, and candidate scoring can be performed in batches; in contrast, expert model evaluation
Typically, it involves larger models and additional autoregressive decoding.

\textbf{Direct CoT} generates a complete reasoning trail for each problem, with a time complexity of

\[
    \mathcal{O}(NKC^{\mathrm{traj}}_{\mathrm{gen}}).
\]

\textbf{Stepwise} reasoning breaks down a complete trajectory into at most $K$ stepwise generations, so its complexity is

\[
    \mathcal{O}(NKC^{\mathrm{step}}_{\mathrm{gen}}).
\]

The two approaches are comparable in terms of the number of tokens generated. Still, Stepwise requires multiple rounds of model calls and repeated context encoding, so it typically results in higher actual latency.

\textbf{Self-Consistency} independently samples $M$ complete inference trajectories and aggregates the final answers; its main computational overhead is

\[
    \mathcal{O}(NKMC^{\mathrm{traj}}_{\mathrm{gen}}).
\]

The additional computations involved in this method are almost entirely devoted to expanding the sampling space, while there is no explicit control over the quality of intermediate candidates.

\textbf{In ToT}, each inference step generates $M$ candidate continuations, and an evaluation model selects the subsequent extension branch. Therefore, its complexity can be expressed as

\[
    \mathcal{O}(NKMC^{\mathrm{step}}_{\mathrm{gen}}+NKMC_{\mathrm{vote}}).
\]

If a single model invocation evaluates all candidates for the current step simultaneously, the number of evaluation invocations can be written as $NK$; however, its input length and computational complexity still scale with $M$. Therefore, under token-level computational overhead, it remains approximately linearly dependent on the number of candidates. The additional voting cost for the final answer pool is $\mathcal{O}(NMC_{\mathrm{vote}})$, which can be absorbed by the main term mentioned above. The main additional overhead of ToT stems from node evaluation. Although the base model can be used directly for self-evaluation, weaker models often struggle to reliably assess the value of complex spatial reasoning steps. To ensure the quality of branch selection, practical applications typically require invoking a more capable LLM or VLM as an evaluator, making $C_{\mathrm{vote}}$ a non-negligible cost.

\begin{figure*}[t]
    \centering
    \includegraphics[width=0.8\linewidth]{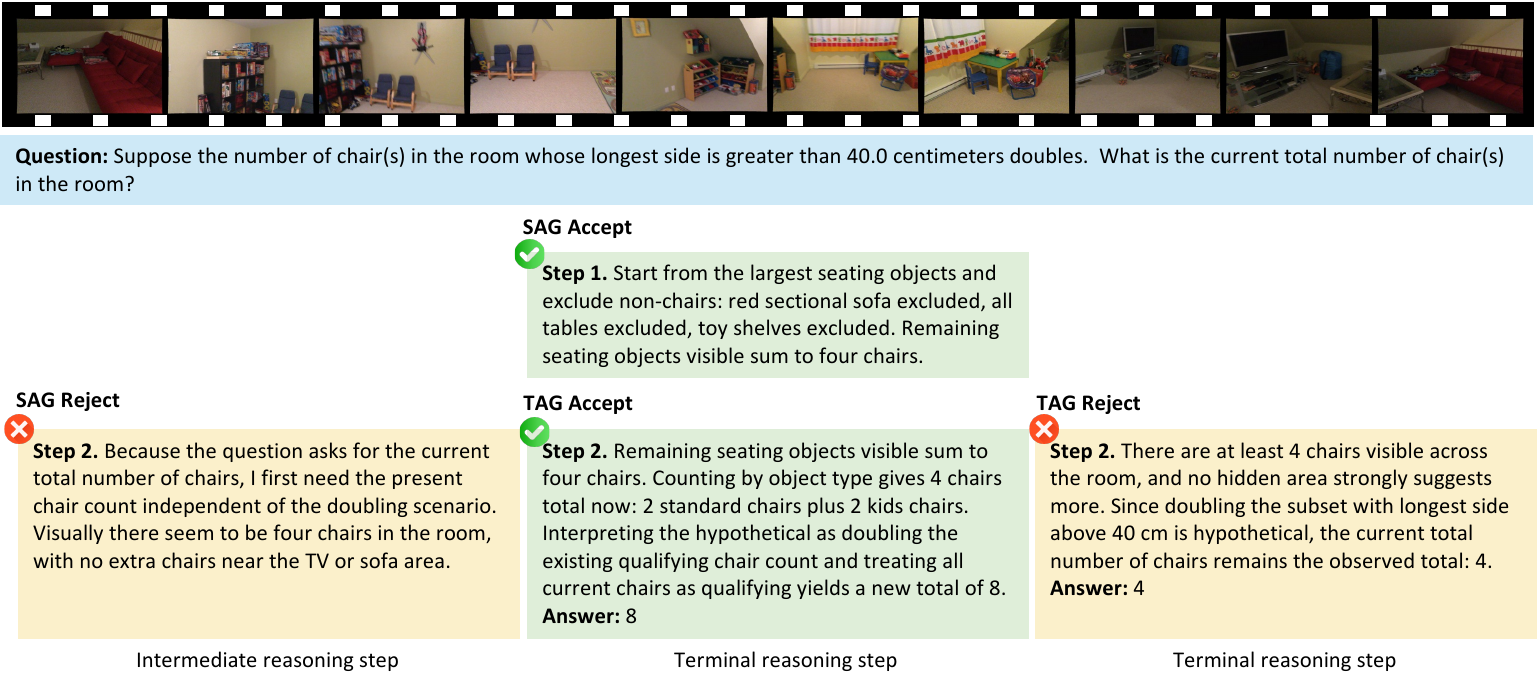}
    \caption{Qualitative illustration of hierarchical advantage-guided gating on a object-counting example. SAG accepts the visually grounded prefix that correctly identifies four chairs, while rejecting an intermediate step that incorrectly decouples the observed count from the hypothetical doubling operation. TAG subsequently evaluates each complete reasoning chain together with its final answer, accepting the trajectory that consistently applies the counterfactual transformation and outputs 8, while rejecting the locally plausible but globally inconsistent trajectory ending in 4. Green and red denote accepted and rejected reasoning candidates, respectively.}
    \label{fig: qualitative}
\end{figure*}

\textbf{The AG} and ToT methods described in this paper have similar candidate-generation complexities: each produces $M$ candidates per step; the difference is that AG uses a trained SAG to score intermediate prefixes, while TAG selects the final answer. Therefore, its complexity is

\[
    \mathcal{O}(NKMC^{\mathrm{step}}_{\mathrm{gen}}+NKMC_{\mathrm{gate}}),
\]

\noindent where the main term can also absorb the TAG scoring cost of the final answer pool. From the perspective of the asymptotic order of $N$, $K$, and $M$, both AG and ToT belong to the $\mathcal{O}(NKM)$; however, there is a significant difference in their actual costs during the selection phase. AG’s gated model can be deployed locally and evaluate multiple candidates in a single discriminative forward pass, eliminating the need for repeated calls to large-scale expert models or for generating evaluation text. Since $C_{\mathrm{gate}} \ll C_{\mathrm{vote}}$, AG significantly reduces the computational and invocation costs of value evaluation and branch selection while retaining the ability to perform incremental candidate search.

It should be noted that AG’s efficiency advantage is primarily evident in the candidate selection phase; its candidate generation overhead is on the same order of magnitude as ToT’s and does not rely on reducing the search space to achieve acceleration. Under conditions where the generation length and candidate budget are fixed, the actual computational overhead typically satisfies:

\[
    \text{Direct CoT} < \text{Stepwise} < \text{Self-Consistency} \lesssim \text{AG} < \text{ToT}.
\]

Among these, Self-Consistency and AG have roughly equivalent computational overhead; AG incurs only a relatively lightweight gating-score cost, while ToT incurs additional overhead from evaluating candidates using a strong expert model. Consequently, while AG has a computational cost slightly higher than that of simple repeated sampling, it achieves explicit quality control over the candidate solution space. It selects more reliable inference paths at a selection cost significantly lower than that of an expert model search.

\section{Qualitative Analysis of Advantage-Guided Gating}
Figure~\ref{fig: qualitative} illustrates how SAG and TAG intervene at different stages of the reasoning process. The SAG-accepted prefix first excludes visually similar but semantically irrelevant objects, such as sofas, tables, and shelves, and correctly identifies four chairs in the scene. In contrast, the rejected intermediate step treats the current chair count as independent of the hypothetical doubling operation. Although its visual observation of four chairs is plausible, this interpretation is logically inconsistent with the counterfactual condition in the question. Once retained, such a semantic deviation would condition subsequent generation on an incorrect problem formulation and propagate the error along the remaining trajectory. SAG therefore serves as a prospective prefix filter: it preserves reasoning states that remain logically consistent with the task and suppresses locally plausible but directionally incorrect steps before they are further expanded.

TAG operates at the terminal level and jointly evaluates the complete reasoning trajectory and its predicted answer. In this example, both terminal candidates are based on the same visual count of four chairs, but they differ in how the counterfactual condition is applied. The accepted trajectory consistently connects the visual evidence, the doubling operation, and the final answer of 8. By contrast, the rejected trajectory concludes that the total remains 4, producing an answer that is visually plausible but inconsistent with its required transformation. This demonstrates that TAG does not merely assess answer plausibility in isolation; instead, it considers both trajectory-level logical consistency and final-answer correctness. The two gates thus play complementary roles: SAG prevents early logical errors from contaminating subsequent reasoning, whereas TAG identifies residual errors that become evident only after examining the complete chain and answer. Their combination provides hierarchical control from intermediate reasoning preservation to final trajectory selection.

\end{document}